\documentclass[conference]{IEEEtran}
\IEEEoverridecommandlockouts

\usepackage{cite}
\usepackage{amsmath,amssymb,amsfonts}
\usepackage{amsthm}
\usepackage{mathtools}
\usepackage{graphicx}
\usepackage{textcomp}
\usepackage{xcolor}
\usepackage{booktabs}
\usepackage{multirow}
\usepackage{array}
\usepackage{tabularx}
\usepackage{microtype}
\usepackage[hyphens]{url}
\usepackage{hyperref}

\ifCLASSOPTIONcompsoc
  \usepackage[caption=false,font=normalsize,labelfont=sf,textfont=sf]{subfig}
\else
  \usepackage[caption=false,font=footnotesize]{subfig}
\fi

\def\BibTeX{{\rm B\kern-.05em{\sc i\kern-.025em b}\kern-.08em
    T\kern-.1667em\lower.7ex\hbox{E}\kern-.125emX}}

\newtheorem{theorem}{Theorem}
\newtheorem{proposition}{Proposition}
\newtheorem{lemma}{Lemma}
\newtheorem{corollary}{Corollary}

\theoremstyle{definition}

\theoremstyle{remark}

\newif\iffull
\ifdefined\buildfull\fulltrue\fi

\begin{document}
\bstctlcite{IEEEexample:BSTcontrol}

\title{Efficient Quantization Training with Stochastic Rounding via Structured Subspace Randomness}

\author{\IEEEauthorblockN{Zeyu Wang, Yiheng Duan, Tao Chen, Donghyun Koo, Sihan Liu, Wenbo Zhang\IEEEauthorrefmark{1}, Chuan He, Liang Liu}
\IEEEauthorblockA{Google, Mountain View, CA, USA\\
\{xavierzeyuwang, yhduan, taoc, koodk, liusih, wenbozhang, hec, liangliu\}@google.com\\
\IEEEauthorrefmark{1}Corresponding author}
}

\maketitle

\begin{abstract}
Low-precision training often relies on stochastic rounding (SR) to keep quantized gradients unbiased and prevent small updates from rounding to zero. Standard SR draws an independent random threshold for every gradient element. However, accelerators generate random numbers on vector units whose throughput lags far behind their matrix units. Drawing one threshold per element saturates the vector unit and can make INT8 training with SR slower than BF16 training. Sharing random thresholds across tensor elements removes this cost but forces those elements to round in the same direction and shifts their rounding errors by a common offset. We find that this offset degrades training primarily through the activation gradient, which propagates the error backward into earlier layers. Because earlier layers cannot cancel an incoming offset using their own local rounding errors, the offset must be removed at the layer where it originates. We propose Structured Subspace Stochastic Rounding (S3R), which quantizes each upstream gradient once for both the activation and weight gradient matrix multiplications and shares a single threshold per token across the hidden dimension. Its error correction, S3R-EC, subtracts the structured mean rounding error from the activation and weight gradients while leaving the optimizer unchanged. In LLM pre-training experiments, uncorrected threshold sharing degrades convergence, whereas S3R-EC eliminates this degradation and matches the validation loss of independent SR. Furthermore, S3R-EC accelerates full INT8 training steps of dense LLMs on TPU v5p by 1.24x to 1.32x over BF16 and by 1.11x over independent SR.
\end{abstract}

\begin{IEEEkeywords}
Stochastic rounding, quantization training, random number generation, variance-covariance analysis, large language models.
\end{IEEEkeywords}

\section{Introduction}

Training large language models (LLMs) demands massive compute and memory bandwidth \cite{brown2020language, touvron2023llama, grattafiori2024llama}. To reduce both costs during pre-training and fine-tuning, full quantization training (QT) accelerates the training step itself by executing both the forward- and backward-pass matrix multiplications in 8-bit or 4-bit precision \cite{hubara2017quantized, sun2020ultra, micikevicius2022fp8, peng2023fp8lm, liu2024deepseekv3, xi2024jetfire}. On GPUs, fine-grained FP8 recipes rely on an exponential quantization grid whose step size shrinks near zero, preserving small values under standard deterministic rounding \cite{liu2024deepseekv3}. In contrast, integer formats such as INT8 (the native 8-bit format on accelerators like Google Cloud TPU v5p \cite{xi2024jetfire}) use a uniform grid with a constant step size, and emerging 4-bit floating-point formats (NVFP4 and MXFP4) \cite{nvidia2025nvfp4, tseng2025training} have an extremely coarse grid, making the backward pass in both settings prone to severe underflow.

Specifically, because backward gradients are heavy-tailed and span a wide dynamic range, many of their entries fall below half a quantization step. Deterministic round-to-nearest-even (RTNE) maps these entries to zero every time, causing the affected parameters to stall in a failure mode known as gradient stagnation \cite{gupta2015deep}. Stochastic rounding (SR) prevents this stagnation by rounding each value up or down with probabilities proportional to its distance from the two neighboring grid points. The quantized gradient remains unbiased, and small updates survive in expectation \cite{wang2018training, connolly2021stochastic, croci2022stochastic}.

Standard SR, however, requires an independent uniform random threshold for every gradient element. On modern accelerators, general matrix multiplications (GEMMs) execute on dedicated matrix units (MXUs, such as systolic arrays or Tensor Cores), whereas counter-based pseudo-random number generators (PRNGs, such as Threefry and Philox \cite{salmon2011parallel}) execute on the vector processing unit (VPU). Because matrix throughput has scaled much faster than vector throughput across hardware generations, generating a 32-bit random word per gradient element saturates the VPU and stalls the MXU. Consequently, INT8 training with SR can run slower than BF16 training despite INT8 offering twice the peak MXU throughput of BF16. For example, using JAX's default Threefry generator \cite{bradbury2018jax} on TPU v5p, independent SR executes a single feed-forward network (FFN) block at only 0.58x the speed of BF16 for smaller model dimensions.

To eliminate this random number generation bottleneck, we propose Structured Subspace Stochastic Rounding (S3R), which shares each random threshold across multiple tensor elements. Because each threshold remains uniformly distributed, every element is still rounded without bias. Unlike standard SR with independent draws \cite{connolly2021stochastic}, sharing a threshold forces elements in the same group to round in the same direction, shifting their rounding errors by a common offset. We show that this offset degrades convergence primarily through the activation gradient, which propagates the error backward into earlier layers. Because earlier layers cannot cancel an incoming offset using only their own local rounding errors, we pair S3R with a structured error correction method (S3R-EC) that subtracts the offset directly at the layer where it is generated. Our contributions are as follows.

\begin{enumerate}
    \item We design S3R to eliminate the random number generation overhead of SR. Standard quantized training quantizes the upstream gradient twice in the backward pass, once for the activation gradient and once for the weight gradient, drawing a separate random threshold for every tensor element. S3R instead quantizes each upstream gradient once for both matrix multiplications and draws a single random threshold per token, sharing it across the hidden dimension. Implemented in standard JAX without custom kernels, S3R allows XLA to broadcast these thresholds directly inside vector registers without materializing the full random tensor in memory.
    \item We analyze the error covariance induced by shared thresholds and derive S3R-EC (Sections~\ref{subsec:analysis} and~\ref{subsec:s3r_opt}). Because sharing a threshold across a sequence correlates errors across tokens, S3R shares one threshold per token across the hidden dimension. We prove that per-token sharing preserves the weight gradient variance of independent SR and confines the positive error covariance to each token's activation gradient, which propagates into earlier layers (Lemma~\ref{lemma:shared_offset} and Proposition~\ref{prop:coherent_dx}). We then show that subtracting the mean rounding error removes this covariance (Theorem~\ref{thm:centering} and Corollary~\ref{cor:structured_topologies}). Building on this result, S3R-EC subtracts the mean error along the token and hidden dimensions without extra random draws or optimizer changes (Proposition~\ref{prop:two_way}), while its dX-only variant corrects only the activation gradient to capture nearly the full accuracy gain at lower overhead.
    \item We evaluate convergence in Llama-style language model pre-training and show that while uncorrected threshold sharing degrades validation loss, S3R-EC eliminates this degradation and matches the quality of independent SR (Section~\ref{subsec:convergence}). In end-to-end benchmarks on TPU v5p, S3R-EC accelerates full training steps of three dense LLMs \cite{grattafiori2024llama, qwen2025qwen3, gemma2025gemma3} by 1.24x to 1.32x over BF16 and by 1.11x over independent SR (Section~\ref{subsec:results}). On two mixture-of-experts (MoE) models \cite{qwen2025qwen3, liu2024deepseek}, S3R and S3R-EC achieve speedups of 1.09x to 1.11x and 1.04x to 1.06x over independent SR, respectively.
\end{enumerate}

\begin{figure*}[t!]
\centering
\includegraphics[width=0.94\textwidth]{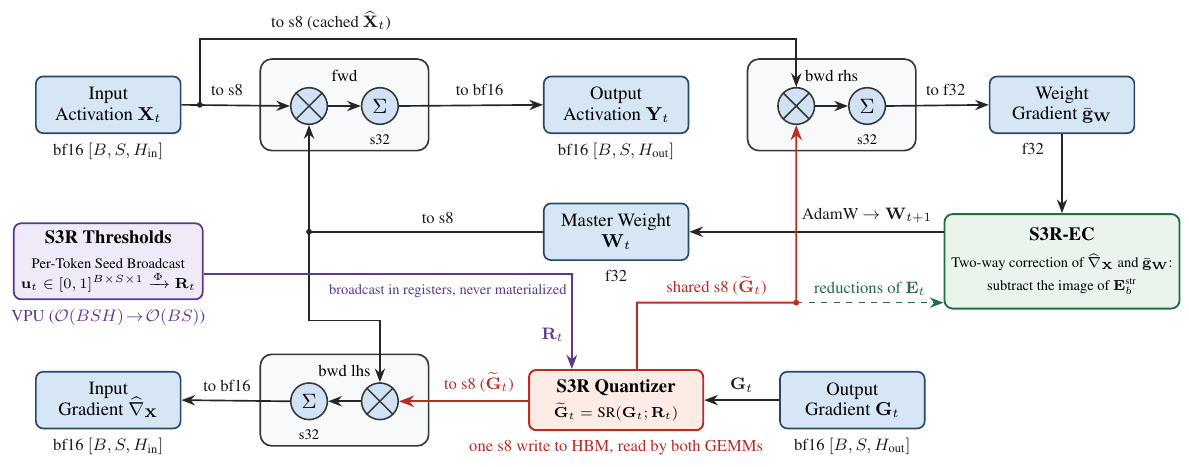}
\vspace{-4pt}
\caption{Overview of INT8 (S8) quantization training with S3R and its structured error correction S3R-EC. On the VPU, S3R broadcasts a compact seed $\mathbf{u}_t \in [0, 1]^{B \times S \times 1}$, one threshold per token, to $\mathbf{R}_t = \Phi(\mathbf{u}_t)$ inside the vector registers, so $\mathbf{R}_t$ is never materialized. In the backward pass, the upstream gradient $\mathbf{G}_t$ is quantized \emph{once}, and the int8 result is written to HBM once and read by both the activation-gradient product $\widetilde{\mathbf{G}}_t \mathbf{W}_t^\top$ and the weight-gradient product $\widehat{\mathbf{X}}_t^\top \widetilde{\mathbf{G}}_t$. S3R-EC subtracts the token-constant and channel-constant parts of the rounding error from $\widehat{\nabla}_{\mathbf{X}}$ and from the FP32 weight gradient $\bar{\mathbf{g}}_{\mathbf{W}}$, and the AdamW update is unchanged.}
\label{fig:int8_quantization}
\vspace{-6pt}
\end{figure*}

\section{Related Work}
\label{sec:related}

\subsection{Low-Precision Training and Stochastic Rounding}
Full quantization training runs the forward product and both backward products in low precision \cite{hubara2017quantized, sun2020ultra, micikevicius2022fp8}. FP8 recipes train LLMs at scale with fine-grained scaling and stability fixes \cite{peng2023fp8lm, liu2024deepseekv3, fishman2025scaling} and also compress optimizer states and activations \cite{xi2024coat}. INT8 recipes quantize the whole data flow \cite{xi2024jetfire} or keep the weight-gradient product in 16 bits \cite{wortsman2023stable}. At 4 bits, NVFP4 and MXFP4 recipes apply SR to gradients after a random Hadamard transform \cite{nvidia2025nvfp4, tseng2025training}, and LUQ rounds them without bias on a logarithmic grid \cite{chmiel2023accurate}. Other 4-bit recipes sample gradients by leverage score \cite{xi2023training} or, like Quartet, prefer round-to-nearest \cite{castro2025quartet}.

By preventing gradient stagnation \cite{gupta2015deep}, unbiased SR enabled early 8-bit floating-point training \cite{wang2018training}. Under independent draws, its errors are mean-independent, which gives probabilistic error bounds that grow with the square root of the problem size \cite{connolly2021stochastic}. SR also implicitly regularizes tall-and-thin matrices \cite{dexter2024stochastic}, and Croci et al.\ survey its uses \cite{croci2022stochastic}. In training, SR yields unbiased gradient estimators whose variance depends on the quantizer \cite{chen2020statistical}, and its interaction with Adam has been analyzed for LLMs \cite{ozkara2025stochastic}. These analyses assume independent or mean-independent errors, which shared thresholds violate (Lemma~\ref{lemma:shared_offset}, Proposition~\ref{prop:coherent_dx}).

\subsection{The Cost of Randomness}
Counter-based generators such as Philox and Threefry compute each random word as a keyed function of a counter \cite{salmon2011parallel}, and JAX uses Threefry by default \cite{bradbury2018jax}. Each word takes many rounds of integer operations on the vector unit, which has scaled far more slowly than the matrix units. In inference, a few extra vector instructions per multiply-accumulate can already make a kernel vector-bound \cite{lin2024qserve}. On TPU v5p, the independent-SR quantizer issues more than a hundred vector operations per element (Section~\ref{subsec:results}). Materializing random tensors also adds off-chip traffic, which costs far more energy than arithmetic \cite{han2015learning, han2016eie}. S3R broadcasts its thresholds in vector registers without materializing them, and it writes the quantized INT8 gradient to HBM once for both backward products.

Hardware support lowers the cost of SR. Graphcore's IPU supports SR in binary16 and binary32 arithmetic \cite{croci2022stochastic}, proposed SR units accelerate SR in fixed-point arithmetic \cite{mikaitis2021stochastic}, and Blackwell GPUs provide conversion instructions (\texttt{cvt.rs}) that round to FP16, BF16, FP8, FP6 or FP4 with random bits supplied by software \cite{nvidia2025ptx}. Such support, however, is tied to particular chips and number formats, and it still needs a random value for every element, which software must generate for \texttt{cvt.rs}. S3R needs no special hardware and far fewer random values.

With few random bits per element, SR becomes biased unless the implementation compensates. El Arar et al.\ analyze limited-precision SR \cite{elarar2025probabilistic}, and Fitzgibbon and Felix show that natural few-bit implementations can be significantly biased \cite{fitzgibbon2025stochastic}. S3R uses 7-bit thresholds shifted by half a grid step (Section~\ref{sec:method}) and thus falls in this regime. Slicing one random word into several narrow values, as S3R also does, is existing practice in JAX's legacy Threefry path \cite{bradbury2018jax} and in \texttt{cvt.rs} \cite{nvidia2025ptx}. Generator-free SR instead takes random bits from values already present in training \cite{yuan2022you}. Our contribution concerns the far larger saving from sharing one random value across many elements, and what this sharing does to training.

\subsection{Shared Randomness and Error Correction}
SR is equivalent to round-to-nearest with non-subtractive dither, which adds a uniform offset over one quantization step before rounding \cite{gray1993dithered}. Prior work uses non-subtractive dither to sparsify backward gradients \cite{wiedemann2020dithered} and couples random draws across distributed clients so their quantization errors partially cancel upon aggregation \cite{suresh2022correlated}. Rather than coupling draws across clients to cancel errors, S3R shares random thresholds within a tensor to reduce generation cost. Because LLM token representations are anisotropic with persistent outlier channels across tokens \cite{ethayarajh2019contextual, xiao2023smoothquant, lin2024awq}, S3R shares each threshold across the hidden dimension within a token rather than across tokens. Within each token, this shared threshold shifts element rounding errors by a common offset (Proposition~\ref{prop:coherent_dx}).

To reduce quantization error after rounding, classical subtractive dither subtracts the exact dither signal from every element \cite{gray1993dithered}, while post-training bias correction subtracts the expected output error of quantized weights \cite{nagel2019data}. S3R-EC instead operates on the realized backward gradient error without per-element subtraction, removing only the mean rounding error along the token and hidden dimensions to eliminate the shared offset (Section~\ref{subsec:s3r_opt}). Because these mean errors have zero expectation, subtracting their linear effect acts as a control variate that keeps the gradients unbiased while complementing few-bit SR techniques.

\section{Methodology and Theoretical Framework}
\label{sec:method}
\label{methodology}

In this section, we first introduce SR in quantized backpropagation, then describe how S3R works and shares thresholds, and analyze the effect of sharing, which motivates the structured error correction S3R-EC for both the dX-only and two-way variants.

\begin{figure*}[t]
\centering
\includegraphics[width=\textwidth]{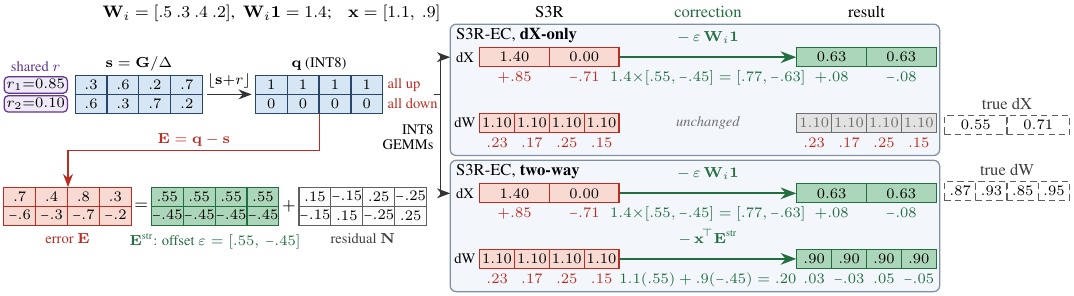}
\vspace{-12pt}
\caption{Worked example of S3R-EC (two tokens, four channels, units of $\Delta$). Left: one shared threshold per token ($r_1 = 0.85$, $r_2 = 0.10$) rounds all channels of token~1 up and all of token~2 down; the known error $\mathbf{E} = \mathbf{q} - \mathbf{s}$ splits into $\mathbf{E}^{\text{str}}$ and a residual $\mathbf{N}$ with zero row and column sums. Here $\mathbf{E}^{\text{str}}$ reduces to one offset $\varepsilon$ per token (in general it also has a per-channel term). Right: both variants subtract $\varepsilon\,\mathbf{W}_i\mathbf{1}$ from the activation gradient (row $\mathbf{W}_i$ of $\mathbf{W}$); two-way also subtracts $\mathbf{x}^\top\mathbf{E}^{\text{str}}$ from the weight gradient (input feature $\mathbf{x}$), which dX-only leaves unchanged.}
\label{fig:ec_example}
\vspace{-6pt}
\end{figure*}

\subsection{Quantized Backpropagation with Stochastic Rounding}

Consider a linear projection $\mathbf{Y} = \mathbf{X}\mathbf{W}$ of a Transformer with activations $\mathbf{X} \in \mathbb{R}^{B \times S \times H_{\text{in}}}$, weights $\mathbf{W} \in \mathbb{R}^{H_{\text{in}} \times H_{\text{out}}}$, micro-batch size $B$, and sequence length $S$. Writing $H = H_{\text{out}}$ for brevity, let $\mathbf{G} = \partial\ell/\partial\mathbf{Y} \in \mathbb{R}^{B \times S \times H}$ be the upstream gradient of the summed per-token loss $\ell = \sum_{b,s}\ell_{b,s}$, where the training loss is the token mean $\mathcal{L} = \ell/(BS)$. The backward pass computes two matrix products: the activation-gradient (dX) product $\mathbf{G}\mathbf{W}^\top$ and the weight-gradient (dW) product $\mathbf{X}^\top\mathbf{G}$, where $\mathbf{G}$ serves as the left-hand (LHS) and right-hand (RHS) operand, respectively.

In the INT8 pipeline (Fig.~\ref{fig:int8_quantization}), each operand of the forward and backward products is scaled, rounded from BF16 to S8, multiplied as an S8 general matrix multiplication (GEMM) with S32 accumulation, and rescaled. In the backward pass, SR is applied to $\mathbf{G}$. For a step size $\Delta > 0$ and thresholds $\mathbf{R} \in [0, 1]^{B \times S \times H}$, the stochastic rounding operator $\widetilde{\mathbf{G}} = \text{SR}(\mathbf{G}; \mathbf{R})$ rounds each element up when its normalized residual $\mathbf{P} = \mathbf{G}/\Delta - \lfloor \mathbf{G}/\Delta \rfloor$ reaches its threshold:
\begin{equation}
\label{eq:sr_def}
\text{SR}(G_{b,s,h}; R_{b,s,h}) = \Delta \left( \left\lfloor \frac{G_{b,s,h}}{\Delta} \right\rfloor + \mathbb{I}\left(P_{b,s,h} \ge R_{b,s,h}\right) \right),
\end{equation}
where $\mathbb{I}(\cdot)$ is the indicator function. With rounding error $\mathbf{E}(\mathbf{G}, \mathbf{R}) = \widetilde{\mathbf{G}} - \mathbf{G}$ and uniform thresholds, each entry $E_{b,s,h} = \Delta (\mathbb{I}(P_{b,s,h} \ge R_{b,s,h}) - P_{b,s,h})$ is a centered Bernoulli variable scaled by $\Delta$. For example, with $\Delta = 1$, the value $0.3$ is rounded to $1$ with probability $0.3$ and to $0$ otherwise, so its mean is exactly $0.3$. Our quantizer computes $\Delta\lfloor G/\Delta + R\rfloor$, which is (\ref{eq:sr_def}) with the threshold $1 - R$; this threshold is uniform and shared wherever $R$ is, so all results apply.

The forward product applies RTNE with per-token scales $\mathbf{s}_x$ and per-output-channel scales $\mathbf{s}_w$, so that $\mathbf{X} \approx \mathrm{diag}(\mathbf{s}_x)\widehat{\mathbf{X}}$ and $\mathbf{W} \approx \widehat{\mathbf{W}}\,\mathrm{diag}(\mathbf{s}_w)$ with S8 operands $\widehat{\mathbf{X}}$ and $\widehat{\mathbf{W}}$. When both backward products share a quantized gradient (as in RTNE and S3R), the backward pass folds both scales into the gradient and quantizes the result once:

\begin{equation}
\label{eq:scale_fold}
\begin{aligned}
\mathbf{G}' &= \mathrm{diag}(\mathbf{s}_x)\,\mathbf{G}\,\mathrm{diag}(\mathbf{s}_w),\\
\widehat{\nabla}_{\mathbf{X}} &= \mathrm{diag}(\mathbf{s}_x)^{-1}\,\Delta\mathbf{Q}\,\widehat{\mathbf{W}}^\top,\\
\widehat{\nabla}_{\mathbf{W}} &= \Delta\,\widehat{\mathbf{X}}^\top\mathbf{Q}\,\mathrm{diag}(\mathbf{s}_w)^{-1},
\end{aligned}
\end{equation}
where $\Delta = \max|\mathbf{G}'|/127$ and $\mathbf{Q} = \text{SR}(\mathbf{G}'; \mathbf{R})/\Delta$ holds the S8 codes. With $\mathbf{G}'$ in place of $\Delta\mathbf{Q}$, the two products equal $\mathbf{G}\mathbf{W}^\top$ and $\mathbf{X}^\top\mathbf{G}$ for the dequantized operands. One S8 tensor feeds both GEMMs, which contract its channels (dX) and its tokens (dW), so only a per-tensor step can be applied after S32 accumulation; fine-grained recipes such as DeepSeek-V3 \cite{liu2024deepseekv3} instead requantize an operand for its transposed use. Below, $\mathbf{G}$, $\mathbf{W}$ and $\mathbf{X}$ denote $\mathbf{G}'$, $\widehat{\mathbf{W}}$ and $\widehat{\mathbf{X}}$, and the fixed rescalings by $1/\mathbf{s}_x$ and $1/\mathbf{s}_w$, which preserve unbiasedness, are left implicit.

Conventional QT implementations draw independent full-size thresholds $\mathbf{R}^{\text{LHS}}, \mathbf{R}^{\text{RHS}} \sim \mathcal{U}([0, 1]^{B \times S \times H})$ for the two products: $\widehat{\nabla}_{\mathbf{X}} = \text{SR}(\mathbf{G}; \mathbf{R}^{\text{LHS}})\mathbf{W}^\top$ estimates $\mathbf{G}\mathbf{W}^\top$, and $\widehat{\nabla}_{\mathbf{W}} = \mathbf{X}^\top \text{SR}(\mathbf{G}; \mathbf{R}^{\text{RHS}})$ estimates $\mathbf{X}^\top\mathbf{G}$, so $\bar{\mathbf{g}}_{\mathbf{W}} = \widehat{\nabla}_{\mathbf{W}}/(BS)$ estimates $\nabla_{\mathbf{W}}\mathcal{L}$. This costs $2BSH$ 32-bit Threefry words per projection, one per element and product. Each word is hashed on the VPU, so the quantizer issues more than ten times as many vector operations as an RTNE quantizer, while the S8 GEMMs on the MXU take the same time.

\subsection{Structured Subspace Randomness}
S3R reduces this cost in two steps (Fig.~\ref{fig:int8_quantization}): it uses one threshold tensor for both backward products, and it broadcasts a small seed to the full tensor. Only the second step changes the error distribution of each product. Each threshold is a 7-bit \texttt{bf16} value; as in existing implementations \cite{bradbury2018jax, nvidia2025ptx}, S3R cuts four thresholds from each 32-bit Threefry word \cite{salmon2011parallel}, and a half-step shift of the threshold grid bounds the resulting rounding bias by $\Delta/256$. Lemma~\ref{lemma:unbiasedness} thus holds up to this discretization. S3R builds on partitionable Threefry, which gives bitwise-identical thresholds under pure-FSDP, mixed and pure-TP factorizations without added collectives.

\subsubsection{Inter-Operand Sharing}
With separate PRNG keys, the gradient is quantized twice. S3R uses one threshold tensor for both products, $\mathbf{R}^{\text{LHS}} = \mathbf{R}^{\text{RHS}} = \mathbf{R}$, so the quantizations coincide, as under RTNE, and $\mathbf{Q}$ in (\ref{eq:scale_fold}) is computed once. XLA compiles the quantizer into a fusion that writes $\mathbf{Q}$ to HBM once, at one byte per element, for both GEMMs to read. This halves the backward quantization work and the S8 writes ($\alpha_{\text{share}} = 1/2$), and both products remain unbiased (Lemma~\ref{lemma:unbiasedness}, Part~2). Their errors become correlated, but the correlation enters none of the second moments analyzed below.

\subsubsection{Threshold Broadcasting}
\label{subsec:broadcast}
The second step draws a compact seed $\mathbf{u} \sim \mathcal{U}([0, 1]^{d_0})$ with $d_0 \ll BSH$ and expands it to $\mathbf{R} = \Phi(\mathbf{u}) \in [0, 1]^{B \times S \times H}$ with a broadcast $\Phi$. XLA fuses the broadcast into the quantizer, so thresholds are broadcast in vector registers and $\mathbf{R}$ is never materialized. Every $\Phi$ keeps thresholds marginally uniform; sharing more saves draws but correlates more errors, because elements that share a threshold tend to round in the same direction. Independent SR shares nothing; we study two choices:
\begin{itemize}
    \item \textbf{Per token, $\Phi_{\text{2D}}$ (default).} Here $\mathbf{u} \in [0, 1]^{B \times S \times 1}$ and $R_{b,s,h} = u_{b,s}$. This cuts PRNG work by a factor of $H$ and keeps rounding errors independent across tokens. Errors within a token remain positively correlated across channels (Corollary~\ref{cor:structured_topologies}); S3R-EC removes this covariance (Proposition~\ref{prop:two_way}).
    \item \textbf{Per example, $\Phi_{\text{bcast}}$ (ablation).} Here $\mathbf{u} \in [0, 1]^{B \times 1 \times 1}$ and $R_{b,s,h} = u_b$. This needs only $B$ draws per layer, but it correlates the errors of all tokens of an example, and their common part reaches the activation gradient of every token (Proposition~\ref{prop:coherent_dx}), so we use it only as an ablation.
\end{itemize}

\subsection{Why Sharing Hurts and How S3R-EC Repairs It}
\label{subsec:analysis}
Sharing a threshold biases no gradient, but it ties together the rounding errors of the elements that share it. We show in turn that these errors are positively correlated, where the correlation adds error, and that subtracting their known mean, which is what S3R-EC does, removes the correlation. Throughout, $\mathbf{G}$ is fixed and only the thresholds are random; we assume nothing about the residuals and require only that every threshold be uniform on $[0, 1)$.

\subsubsection{First-Order Unbiasedness}
SR is unbiased whenever each threshold is marginally uniform. Expectation is linear, so correlations among the thresholds do not matter.

\begin{lemma}[First-Order Marginal Unbiasedness]
\label{lemma:unbiasedness}
Fix $\mathbf{G} \in \mathbb{R}^{B \times S \times H}$ (fixed residuals), and let $\mathbf{R} = \Phi(\mathbf{u})$ be any random threshold field whose entries are marginally uniform, $R_{b,s,h} \sim \mathcal{U}(0, 1)$. Then:
\begin{enumerate}
    \item \textbf{Linear maps:} $\mathbb{E}_{\mathbf{u}}[T(\text{SR}(\mathbf{G}; \Phi(\mathbf{u})))] = T(\mathbf{G})$ for every linear operator $T$.
    \item \textbf{Shared quantized gradient:} When both backward products use the same $\widetilde{\mathbf{G}}$, $\mathbb{E}_{\mathbf{u}}[\widetilde{\mathbf{G}}\mathbf{W}^\top] = \mathbf{G}\mathbf{W}^\top$ and $\mathbb{E}_{\mathbf{u}}[\mathbf{X}^\top\widetilde{\mathbf{G}}] = \mathbf{X}^\top\mathbf{G}$. Hence $\mathbb{E}_{\mathbf{u}}[\bar{\mathbf{g}}_{\mathbf{W}}] = \nabla_{\mathbf{W}}\mathcal{L}$.
\end{enumerate}
\end{lemma}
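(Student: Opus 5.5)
The plan is to reduce everything to a single per-element identity and then use linearity of expectation. Fix an entry $(b,s,h)$. Because $\mathbf{G}$ is fixed, both $\lfloor G_{b,s,h}/\Delta\rfloor$ and the residual $P_{b,s,h}\in[0,1)$ are deterministic. The only random quantity in (\ref{eq:sr_def}) is the indicator $\mathbb{I}(P_{b,s,h}\ge R_{b,s,h})$. The marginal uniformity assumption gives $\Pr(R_{b,s,h}\le P_{b,s,h}) = P_{b,s,h}$; the boundary event has probability zero, so whether the inequality is strict does not matter. Hence
\[
\mathbb{E}_{\mathbf{u}}[\text{SR}(G_{b,s,h};R_{b,s,h})] = \Delta\bigl(\lfloor G_{b,s,h}/\Delta\rfloor + P_{b,s,h}\bigr) = G_{b,s,h},
\]
or equivalently $\mathbb{E}[E_{b,s,h}]=0$. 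This step uses only the marginal law of $R_{b,s,h}$, so it does not depend on $\Phi$ or on any dependence between entries. I will also note that the same holds for the implemented quantizer $\Delta\lfloor G/\Delta+R\rfloor$, via the threshold $1-R$, which is again uniform.

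For Part~1, I will treat $\widetilde{\mathbf{G}}$ as a finite-dimensional random tensor with bounded entries, so all expectations exist. A linear operator $T$ acts as a finite linear combination of entries, $T(\widetilde{\mathbf{G}})_k=\sum_{b,s,h} c_{k,b,s,h}\widetilde G_{b,s,h}$. Linearity of expectation then gives $\mathbb{E}[T(\widetilde{\mathbf{G}})]=T(\mathbb{E}[\widetilde{\mathbf{G}}])=T(\mathbf{G})$. Linearity needs no independence, and I will state this explicitly, since it is the point of the lemma: the correlations introduced by sharing thresholds through $\Phi$ are irrelevant at first order.

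For Part~2, I will apply Part~1 to the two linear maps $T_1(\mathbf{A})=\mathbf{A}\mathbf{W}^\top$ and $T_2(\mathbf{A})=\mathbf{X}^\top\mathbf{A}$, both applied to the same $\widetilde{\mathbf{G}}$. Each conclusion is a separate expectation, so using one shared tensor for both products creates no interaction between them. Then $\bar{\mathbf{g}}_{\mathbf{W}}=\widehat{\nabla}_{\mathbf{W}}/(BS)$ is another linear map of $\widetilde{\mathbf{G}}$. The fixed rescalings by $1/\mathbf{s}_x$ and $1/\mathbf{s}_w$ from (\ref{eq:scale_fold}) are also linear and deterministic. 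Its expectation is therefore $\mathbf{X}^\top\mathbf{G}/(BS)$, and by the chain rule for $\mathcal{L}=\ell/(BS)$ with $\mathbf{Y}=\mathbf{X}\mathbf{W}$ this equals $\nabla_{\mathbf{W}}\mathcal{L}$ for the dequantized operands.

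The only delicate point is the conditioning. The step $\Delta=\max|\mathbf{G}'|/127$ and the scales are functions of $\mathbf{G}$ and $\mathbf{X}$, not of $\mathbf{u}$, so they are constants under $\mathbb{E}_{\mathbf{u}}$; the proof should make this explicit. The 7-bit discretization is excluded by hypothesis, consistent with the remark that the lemma holds up to the $\Delta/256$ bias.
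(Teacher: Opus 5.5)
Your proposal is correct and follows the paper's proof essentially step for step: per-entry unbiasedness from $\Pr(R_{b,s,h}\le P_{b,s,h}) = P_{b,s,h}$, which uses only the marginal law of each threshold; then linearity of expectation, with no independence needed, for an arbitrary linear $T$; then specialization to $\mathbf{A}\mapsto\mathbf{A}\mathbf{W}^\top$ and $\mathbf{A}\mapsto\mathbf{X}^\top\mathbf{A}$ on the same $\widetilde{\mathbf{G}}$, followed by division by $BS$. Your extra remarks are consistent with the paper: the $1-R$ threshold of the implemented quantizer, $\Delta$ and the scales being deterministic given $\mathbf{G}$, and the boundary event having probability zero; the paper's only addition is a remark extending the same linearity argument to routed MoE expert gradients.
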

The lemma composes across layers: backpropagation is linear in the upstream gradient, and each layer draws its thresholds independently of the noise from above, so by the tower property every layer's gradient is unbiased with respect to the backward pass of the executed forward graph (with $\widehat{\mathbf{X}}, \widehat{\mathbf{W}}$ if it is quantized). The topology $\Phi$ can therefore be chosen for its cost and its second moments alone.

\subsubsection{Broadcasting Makes the Errors Move Together}
With independent thresholds, every element chooses its rounding direction on its own, and the errors of different elements are independent. With a broadcast threshold, one draw chooses for the whole group: a large draw rounds most of its elements up, and a small draw rounds most of them down.

\begin{lemma}[Covariance of Shared-Threshold Errors]
\label{lemma:shared_offset}
Round with $\Delta\lfloor G/\Delta + R\rfloor$ and marginally uniform thresholds, and let $P_i$ be the residual of element $i$. If elements $i$ and $k$ share one threshold, their rounding errors satisfy
\begin{equation}
\label{eq:shared_offset}
\text{Cov}(E_i, E_k) = \Delta^2\big(\min(P_i, P_k) - P_iP_k\big) \in [0, \Delta^2/4].
\end{equation}
If their thresholds are independent, $\text{Cov}(E_i, E_k) = 0$.
\end{lemma}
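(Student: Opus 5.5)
With $\Delta\lfloor G/\Delta + R\rfloor$, the rounding error of element $i$ is $E_i = \Delta(\mathbb{I}(R \ge 1-P_i) - P_i)$. Under a shared threshold $R$, the indicator $\mathbb{I}(R \ge 1-P_i)$ is exactly $\mathbb{I}(P_i \ge R')$ with $R' = 1-R$, which is again uniform on $[0,1)$. So I would work with $B_i = \mathbb{I}(R' \le P_i)$, a Bernoulli variable with mean $P_i$; this mean is the same fact that underlies Lemma~\ref{lemma:unbiasedness}. Since $E_i = \Delta(B_i - P_i)$ and the shift by $P_i$ is deterministic, the covariance reduces to $\text{Cov}(E_i,E_k) = \Delta^2\,\text{Cov}(B_i,B_k) = \Delta^2(\mathbb{E}[B_iB_k] - P_iP_k)$.

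The key step is the joint moment. Both indicators are thresholded by the same $R'$, so $B_iB_k = \mathbb{I}(R' \le \min(P_i,P_k))$. Its expectation is therefore $\min(P_i,P_k)$, because $R'$ is uniform and $P_i, P_k \in [0,1)$. This gives equation~(\ref{eq:shared_offset}) directly. For the independent case, $B_i$ and $B_k$ are functions of independent thresholds, so $\mathbb{E}[B_iB_k] = P_iP_k$ and the covariance vanishes.

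It remains to prove the range $[0,\Delta^2/4]$. Without loss of generality take $a = P_i \le b = P_k$, so the expression becomes $f(a,b) = a - ab = a(1-b)$. Nonnegativity is immediate, since $a \ge 0$ and $b < 1$. For the upper bound, I would use $a \le b$ to get $a(1-b) \le b(1-b) \le 1/4$, by AM--GM. Equality holds at $a = b = 1/2$, so the bound is sharp.

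I do not expect a real obstacle. The only points needing care are:
\begin{itemize}
\item the sign convention of the quantizer, which the substitution $R' = 1-R$ handles, along with the boundary case $P=0$, a measure-zero tie;
\item stating that only marginal uniformity of the single shared threshold is used.
\end{itemize}
The 7-bit discretization of the thresholds is excluded here, consistent with the idealized setting of Lemma~\ref{lemma:unbiasedness}.
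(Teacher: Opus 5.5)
Your proof is correct and follows the paper's argument essentially step for step: you write the error as a centered Bernoulli indicator, show that the joint indicator $\mathbb{I}(R' \le \min(P_i,P_k))$ has mean $\min(P_i,P_k)$, and bound $a(1-b)$ for the range. The only differences are cosmetic: you substitute $R' = 1-R$ where the paper works directly with $r \ge 1-\min(P_i,P_k)$, and you bound $a(1-b)$ by $b(1-b)$ rather than by $a(1-a)$; both routes give the same $\Delta^2/4$ bound.
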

Every pair of errors in a group that shares a threshold is thus nonnegatively correlated. The covariance is largest when both residuals are near $\frac{1}{2}$ and vanishes when either element is almost exactly representable. If the residuals of a group are spread evenly over $[0, 1)$, each error has variance about $\Delta^2/6$ and each pair has covariance about $\Delta^2/12$, so half of the variance of every element is shared with the rest of its group. Under $\Phi_{\text{2D}}$, a group is the $D = H_{\text{out}}$ channels of one token, and different tokens have independent thresholds.

\subsubsection{Where the Correlation Adds Error}
A backward product adds up errors with weights, and the variance of such a sum is the sum of the variances plus the sum of all pairwise covariances. Independent SR pays only the first part. Correlated errors also pay the second, because they no longer cancel.

\begin{proposition}[Where Sharing Adds Error]
\label{prop:coherent_dx}
Use per-token thresholds, $\Phi_{\text{2D}}$, and let $\mathbf{E}_{b,s} \in \mathbb{R}^D$ be the rounding error of token $(b, s)$.
\begin{enumerate}
    \item Every entry of the weight-gradient error has the same variance as under independent SR.
    \item Entry $i$ of the activation-gradient error of token $(b, s)$, $\sum_j W_{ij}E_{b,s,j}$, has the variance it would have under independent SR plus $\sum_{j \neq k} W_{ij}W_{ik}\,\text{Cov}(E_{b,s,j}, E_{b,s,k})$. If every pair of channels of the token has covariance $c$, this extra term is $c\big((\mathbf{W}_i\mathbf{1}_D)^2 - \|\mathbf{W}_i\|_2^2\big)$.
    \item This error is part of the upstream gradient of the layer below. A correction of this layer's weight gradient leaves it unchanged, and the layer below, whose thresholds are independent of it, cannot cancel it with a correction built from its own rounding error.
\end{enumerate}
\end{proposition}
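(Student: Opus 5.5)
The plan is to write each backward-product error as a weighted sum of the rounding errors $E_{b,s,j}$ and expand its variance as the sum of variances plus pairwise covariances. Under $\Phi_{\text{2D}}$, Lemma~\ref{lemma:shared_offset} gives $\text{Cov}(E_{b,s,j},E_{b',s',k})=0$ whenever $(b,s)\neq(b',s')$, since those thresholds are independent. Within a token it gives $\Delta^2(\min(P_j,P_k)-P_jP_k)$. Each marginal variance is $\Delta^2P_j(1-P_j)$, the same as under independent SR, because the threshold is still marginally uniform. The whole proof therefore reduces to identifying which index pairs occur together in each sum.

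For Part~1, take the weight-gradient error entry $(i,j)$, which is $\sum_{b,s}X_{b,s,i}E_{b,s,j}$. It involves a single channel $j$ and sums over tokens. Distinct tokens have independent thresholds, so all cross covariances vanish. The variance is therefore $\sum_{b,s}X_{b,s,i}^2\Delta^2P_{b,s,j}(1-P_{b,s,j})$, which is exactly the independent-SR value. Sharing one threshold tensor between the two products (inter-operand sharing) does not enter here, because only second moments within this single product are used.

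For Part~2, expand $\text{Var}(\sum_jW_{ij}E_{b,s,j})=\sum_jW_{ij}^2\text{Var}(E_{b,s,j})+\sum_{j\neq k}W_{ij}W_{ik}\text{Cov}(E_{b,s,j},E_{b,s,k})$. The first sum is the independent-SR variance. When every covariance equals $c$, the cross term becomes $c\big(\sum_{j,k}W_{ij}W_{ik}-\sum_jW_{ij}^2\big)=c\big((\mathbf{W}_i\mathbf{1}_D)^2-\|\mathbf{W}_i\|_2^2\big)$.

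Part~3 is the conceptual step and the main obstacle, since "cannot cancel" must be made precise. I would argue it as follows.
\begin{itemize}
\item A correction to $\bar{\mathbf{g}}_{\mathbf{W}}$ is applied after $\widehat{\nabla}_{\mathbf{X}}$ has been formed. The activation gradient passed downward is therefore unchanged as a function of $\mathbf{u}$.
\item In the layer below, this error $\mathbf{e}=\mathbf{E}\mathbf{W}^\top$ (after the linear backprop through intermediate ops) is contained in its upstream gradient. Any correction built from that layer's own error $\mathbf{E}'$ is a function of its own threshold seed $\mathbf{u}'$ and of $\mathbf{G}$-residuals. Here I would condition on the upper layer's realized gradient, noting the residuals depend on $\mathbf{e}$; I would state the claim conditionally, with $\mathbf{u}'$ independent of $\mathbf{u}$.
\item Write $C(\mathbf{u}')$ for that correction. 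Conditional on the upstream gradient, $\mathbf{u}'$ is independent, so $\mathbb{E}\|\mathbf{e}-C\|^2\ge\mathbb{E}\|\mathbf{e}-\mathbb{E}[\mathbf{e}\mid\mathbf{u}']\|^2$.
\item Since $\mathbb{E}[\mathbf{e}\mid\mathbf{u}']=\mathbb{E}[\mathbf{e}]=0$ by Lemma~\ref{lemma:unbiasedness}, the projection of $\mathbf{e}$ onto functions of $\mathbf{u}'$ is zero, and its variance cannot be reduced.
\end{itemize}
The delicate point is the dependence of the lower layer's residuals on $\mathbf{e}$. I would handle it by applying the projection argument conditionally on $\mathbf{u}$ (equivalently, on the realized input to the lower quantizer) and then using the tower property.
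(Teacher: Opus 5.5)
Your Parts~1 and~2 are correct and follow the paper's argument: zero cross-token covariance from Lemma~\ref{lemma:shared_offset}, unchanged marginal variances, and the expansion of the variance of a weighted sum.

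Part~3 has a genuine gap. The bound $\mathbb{E}\|\mathbf{e}-C\|^2\ge\mathbb{E}\|\mathbf{e}-\mathbb{E}[\mathbf{e}\mid\mathbf{u}']\|^2$ holds only if $C$ is a function of $\mathbf{u}'$ alone. As you note yourself, a correction built from $\mathbf{E}'$ also depends on $\mathbf{u}$, through the lower residuals and the lower step $\Delta'$. Conditioning on $\mathbf{u}$ does not repair this. Given $\mathbf{u}$, $\mathbf{e}$ is a constant, so $\mathbb{E}[\mathbf{e}\mid\mathbf{u},\mathbf{u}']=\mathbf{e}$, and the projection bound collapses to the trivial $\ge 0$. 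The orthogonality you invoke, $\mathbb{E}[\mathbf{e}\mid\mathbf{u}']=\mathbf{0}$, is the wrong one for this dependence structure.

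The missing idea is to restrict the class of corrections and to apply Lemma~\ref{lemma:unbiasedness} to the lower layer rather than to $\mathbf{e}$. The paper lets $\mathcal{F}$ collect the forward pass and all randomness above. It takes $\mathbf{C}$ linear in the lower layer's error, with coefficients fixed by $\mathcal{F}$.
\begin{itemize}
\item Given $\mathcal{F}$, the lower residuals and step are fixed and the lower thresholds are still uniform. Hence $\mathbb{E}[\mathbf{Z}\mid\mathcal{F}]=\mathbb{E}[\mathbf{C}\mid\mathcal{F}]=\mathbf{0}$.
\item The arriving error $\mathbf{A}$ is $\mathcal{F}$-measurable.
\item So $\mathbb{E}\langle\mathbf{A},\mathbf{Z}-\mathbf{C}\rangle=0$ by the tower property, and $\mathbb{E}\|\mathbf{A}+\mathbf{Z}-\mathbf{C}\|_F^2\ge\mathbb{E}\|\mathbf{A}\|_F^2$.
\end{itemize}

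This restriction (more precisely, $\mathbb{E}[\mathbf{C}\mid\mathcal{F}]=\mathbf{0}$) is essential, not a technicality. Each $E'_i$ reveals its residual $P'_i$, which depends on $\mathbf{e}$. For instance, let the lower input be $g+e$ with $g=\Delta'/2$ and $e=\pm0.1\Delta'$ equally likely. Then $E'/\Delta'\in\{0.4,-0.6\}$ when $e>0$ and $E'/\Delta'\in\{0.6,-0.4\}$ when $e<0$. Some nonlinear, even unbiased, $f(E')$ therefore equals $e$ exactly.

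Your argument never constrains $C$ beyond being built from $\mathbf{E}'$, so it cannot exclude such $f$. You need to state the linearity (or conditional mean-zero) assumption and use it in place of the projection step.
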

Per-token sharing thus keeps the weight gradient exactly as accurate as independent SR and puts the whole cost into the activation gradient, which sums over exactly the channels that share a threshold. The extra term is large when the rows of $\mathbf{W}$ have a common component, because $(\mathbf{W}_i\mathbf{1}_D)^2$ then grows with the square of the width while $\|\mathbf{W}_i\|_2^2$ grows linearly. The shared part of a token's error is its mean $\varepsilon_{b,s} = \frac{1}{D}\mathbf{1}_D^\top\mathbf{E}_{b,s}$, which enters the activation gradient as $\varepsilon_{b,s}\mathbf{W}\mathbf{1}_D$, along the same direction for every token. By the last statement, this error has to be removed in the activation gradient, where it arises. In a Transformer it does not arrive unchanged: the RMSNorm backward rescales each token and removes its component along the normalized input, and the attention backward mixes tokens. The proposition is therefore a mechanism, not an exact model. If all tokens of an example shared one threshold, the weight gradient would pay the same price over the tokens.

\subsubsection{Subtracting the Mean Removes the Correlation}
The quantizer knows every rounding error exactly, so it can compute the mean error of a group and subtract it. The theorem below shows what this does to correlated errors; it holds for any covariances.

\begin{theorem}[Centering Reduces Covariance]
\label{thm:centering}
Let $X_1, \dots, X_n$, $n \ge 2$, be zero-mean errors with average variance $\bar{\sigma}^2$ and average pairwise covariance $\bar{c}$, let $\bar{X}$ be their mean, and let $Y_i = X_i - \bar{X}$. Then $\text{Var}(\bar{X}) = \frac{1}{n}\bar{\sigma}^2 + \frac{n-1}{n}\bar{c}$, and:
\begin{enumerate}
    \item \textbf{Covariance:} The average pairwise covariance of the $Y_i$ is $\bar{c} - \text{Var}(\bar{X}) = -\frac{1}{n}(\bar{\sigma}^2 - \bar{c}) \le 0$.
    \item \textbf{Variance:} The average variance of the $Y_i$ is $\bar{\sigma}^2 - \text{Var}(\bar{X}) = \frac{n-1}{n}(\bar{\sigma}^2 - \bar{c})$.
    \item \textbf{Equal covariances:} If every $X_i$ has variance $\sigma^2$ and every pair has covariance $c$, each pair's covariance falls from $c$ to $-\frac{1}{n}(\sigma^2 - c)$, and for any weights $w_i$ with mean $\bar{w}$, $\text{Var}(\sum_i w_iY_i) = (\sigma^2 - c)\sum_i(w_i - \bar{w})^2$.
\end{enumerate}
\end{theorem}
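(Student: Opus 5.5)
The plan is a direct second-moment computation with the covariance matrix $\Sigma$ of $(X_1,\dots,X_n)$. We write $\sum_i \Sigma_{ii} = n\bar{\sigma}^2$ and $\sum_{i\neq k}\Sigma_{ik} = n(n-1)\bar{c}$, and we never assume any structure on $\Sigma$ beyond these two sums. First, $\text{Var}(\bar{X}) = \frac{1}{n^2}\mathbf{1}^\top\Sigma\mathbf{1}$. Splitting the double sum into its diagonal and off-diagonal parts gives $\frac{1}{n^2}\big(n\bar{\sigma}^2 + n(n-1)\bar{c}\big)$, which is the stated formula.

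Next, we compute $\text{Cov}(X_i,\bar{X}) = \frac{1}{n}\sum_k\Sigma_{ik} =: a_i$. Its average over $i$ is exactly $\text{Var}(\bar{X})$. The identity
\begin{equation*}
\text{Cov}(Y_i,Y_k) = \Sigma_{ik} - a_i - a_k + \text{Var}(\bar{X})
\end{equation*}
then handles both averages.

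\textbf{Variance.} For Part~2 we set $i=k$ and average over $i$. The $a$-terms contribute $-2\,\text{Var}(\bar{X})$, so the average variance of the $Y_i$ is $\bar{\sigma}^2 - \text{Var}(\bar{X})$. Substituting the first formula gives $\frac{n-1}{n}(\bar{\sigma}^2-\bar{c})$.

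\textbf{Covariance.} For Part~1, the cleanest route is to avoid averaging the $a$-terms over off-diagonal pairs directly. Instead we use $\sum_i Y_i = 0$. This gives $0 = \text{Var}(\sum_i Y_i) = \sum_i\text{Var}(Y_i) + \sum_{i\neq k}\text{Cov}(Y_i,Y_k)$, so the average pairwise covariance equals $-\frac{1}{n-1}$ times the average variance, that is $-\frac{1}{n}(\bar{\sigma}^2-\bar{c})$. We must also check that this equals $\bar{c}-\text{Var}(\bar{X})$. Substituting $\text{Var}(\bar{X})$ shows that it does.

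\textbf{Sign.} The nonpositivity in Part~1 needs $\bar{\sigma}^2 \ge \bar{c}$. This follows from Part~2, since an average of variances is nonnegative and $n\ge2$. An independent check comes from positive semidefiniteness: $\text{Var}(X_i - X_k) \ge 0$ averaged over pairs gives $2\bar{\sigma}^2 - 2\bar{c} \ge 0$.

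\textbf{Equal covariances.} For Part~3, with $\Sigma = (\sigma^2-c)I + c\,\mathbf{1}\mathbf{1}^\top$, the centered vector is $Y = CX$ with $C = I - \frac{1}{n}\mathbf{1}\mathbf{1}^\top$. Then $\text{Cov}(Y) = C\Sigma C = (\sigma^2-c)C$, because $C\mathbf{1} = 0$ kills the rank-one part. The off-diagonal entries of this matrix are $-\frac{1}{n}(\sigma^2-c)$. For the weighted sum, $\text{Var}(w^\top Y) = (\sigma^2-c)\,w^\top C w = (\sigma^2-c)\,\|Cw\|_2^2 = (\sigma^2-c)\sum_i(w_i-\bar{w})^2$.

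The only mildly delicate step is the off-diagonal average in Part~1. The $\sum_i Y_i=0$ trick sidesteps the bookkeeping of the $a_i$ terms, so no real obstacle is expected.
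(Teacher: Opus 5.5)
Your proposal is correct and follows the paper's proof essentially step for step: the same $a_i = \text{Cov}(X_i,\bar{X})$ bookkeeping for $\text{Var}(\bar{X})$ and Part~2, the same $\sum_i Y_i = 0$ argument for Part~1, and the same use of nonnegative centered variances for the sign. Your Part~3 computation $C\Sigma C = (\sigma^2-c)C$ is the matrix form of the paper's step, which rewrites $\sum_i w_iY_i = \sum_i (w_i-\bar{w})X_i$ and notes that zero-sum weights annihilate the rank-one part $c\mathbf{1}\mathbf{1}^\top$.
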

The average covariance and the average variance fall by the same amount, the variance of the mean. For errors that share a threshold, $\bar{c} \ge 0$ by Lemma~\ref{lemma:shared_offset}, and for large $n$ this amount is about $\bar{c}$. Subtracting the mean therefore removes the positive correlation that broadcasting created, and the centered errors are even slightly negatively correlated, like draws without replacement. The case of equal covariances shows the mechanism. Each error is then a shared part with variance $c$ plus a private part with variance $\sigma^2 - c$. Subtracting the mean deletes the shared part and keeps the private parts, which are uncorrelated, so any weighted sum of centered errors has at most the variance of the same sum of independent errors with variance $\sigma^2 - c$. For evenly spread residuals, this is about $\Delta^2/12$, half of the $\Delta^2/6$ of independent SR.

The first two statements are averages and hold for any covariances. The third needs equal covariances, which fixed residuals do not give: by Lemma~\ref{lemma:shared_offset}, an element that is almost exactly representable is nearly uncorrelated with the rest of its group, and subtracting the group mean adds the noise of the group to it. In general, element $i$ becomes less noisy exactly when $2\,\text{Cov}(X_i, \bar{X}) > \text{Var}(\bar{X})$. For $X_1 = -Z$ and $X_2 = X_3 = Z$, the variance of the first element grows to $\frac{16}{9}\text{Var}(Z)$, while the average variance falls to $\frac{8}{9}\text{Var}(Z)$. Single outputs of a GEMM can behave in the same way. In an exact computation on a synthetic layer with unequal residuals, 17--27\% of the activation-gradient outputs become noisier after centering, while their summed variance falls by 40--63\% and stays below that of independent SR.

\subsubsection{Which Axis to Center}
Centering removes covariance only along an axis whose errors are correlated.

\begin{corollary}[Centering under Per-Token Thresholds]
\label{cor:structured_topologies}
Use $\Phi_{\text{2D}}$ and consider one example.
\begin{enumerate}
    \item Subtracting the mean error of a token over its $D$ channels lowers their average pairwise covariance from $\bar{c}_{b,s} \ge 0$ to $-\frac{1}{D}(\bar{\sigma}^2_{b,s} - \bar{c}_{b,s})$, where $\bar{\sigma}^2_{b,s}$ is their average variance.
    \item Subtracting the mean error of channel $j$ over the $S$ tokens finds no covariance to remove, because the tokens have independent thresholds, and lowers the average variance $\bar{\sigma}^2_{b,j}$ of its entries by only $\frac{1}{S}\bar{\sigma}^2_{b,j}$.
\end{enumerate}
\end{corollary}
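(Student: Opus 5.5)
Both parts follow by applying Theorem~\ref{thm:centering} to a suitable group of errors, with Lemma~\ref{lemma:shared_offset} supplying the covariance structure under $\Phi_{\text{2D}}$. Fix one example $b$ and the upstream gradient $\mathbf{G}$, so only the per-token seeds $u_{b,1},\dots,u_{b,S}$ are random. These seeds are i.i.d.\ uniform, and $E_{b,s,j}$ depends only on $u_{b,s}$. Each error has zero mean (Lemma~\ref{lemma:unbiasedness} with $T$ the identity), so the hypotheses of Theorem~\ref{thm:centering} hold for any group we form.

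For Part~1, take the group $X_j = E_{b,s,j}$, $j = 1,\dots,D$, of one token, with $n = D$. These channels share the threshold $u_{b,s}$, so Lemma~\ref{lemma:shared_offset} gives each pairwise covariance $\Delta^2(\min(P_j,P_k) - P_jP_k) \ge 0$. Their average is therefore $\bar{c}_{b,s} \ge 0$. Part~1 of Theorem~\ref{thm:centering} then says that after subtracting $\varepsilon_{b,s}$ the average pairwise covariance becomes $-\frac{1}{D}(\bar{\sigma}^2_{b,s} - \bar{c}_{b,s})$. This is at most zero because $\bar{c} \le \bar{\sigma}^2$. That inequality holds by Cauchy--Schwarz, $|\text{Cov}(X_j,X_k)| \le \sigma_j\sigma_k \le (\sigma_j^2+\sigma_k^2)/2$, or equivalently because the expression equals $-\text{Var}(\bar X)$ plus $\bar c$ and the Theorem already signs it. So the average covariance drops from $\bar c_{b,s}$ to the stated value.

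For Part~2, take the group $X_s = E_{b,s,j}$, $s = 1,\dots,S$, for a fixed channel $j$, with $n = S$. Distinct tokens use independent seeds $u_{b,s}$ and $u_{b,s'}$, so the last clause of Lemma~\ref{lemma:shared_offset} gives zero covariance for every pair. Hence $\bar{c} = 0$ and there is no covariance to remove. Theorem~\ref{thm:centering} gives $\text{Var}(\bar{X}) = \frac{1}{S}\bar{\sigma}^2_{b,j}$. By its Part~2, the average variance falls by exactly this amount, from $\bar{\sigma}^2_{b,j}$ to $\frac{S-1}{S}\bar{\sigma}^2_{b,j}$. The pairwise covariance moves from $0$ to $-\frac{1}{S}\bar{\sigma}^2_{b,j}$, which is only the trivial negative correlation that centering itself introduces.

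I expect no serious obstacle, since everything reduces to Theorem~\ref{thm:centering}. The one point to check carefully is the bookkeeping in Part~1. The group averages must be taken over the channels of a single token, with $n = D = H_{\text{out}}$. Lemma~\ref{lemma:shared_offset} must be applied to the actual rounding rule $\Delta\lfloor G/\Delta + R\rfloor$; the paper notes this is equivalent to (\ref{eq:sr_def}) with threshold $1-R$, so nonnegativity is preserved. I would also remark that these statements concern averages only: individual entries may become noisier, as discussed after Theorem~\ref{thm:centering}.
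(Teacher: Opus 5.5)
Your proposal is correct and follows the paper's proof. Part~1 is Statement~1 of Theorem~\ref{thm:centering} applied to the $D$ channels of one token, with $\bar{c}_{b,s} \ge 0$ from Lemma~\ref{lemma:shared_offset}, and Part~2 is Statement~2 of that theorem with $n = S$ and $\bar{c} = 0$, from the lemma's independence clause; your Cauchy--Schwarz argument for $\bar{c} \le \bar{\sigma}^2$ is harmless but redundant, since the theorem's proof already derives this inequality from $\text{Var}(Y_i) \ge 0$.
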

Under per-token thresholds, the activation-gradient term of S3R-EC, which centers each token, therefore does nearly all of the work, and the weight-gradient term adds little. If the tokens of an example also shared a threshold, centering over tokens would remove their covariance as well, and both terms would matter. Section~\ref{subsec:convergence} confirms both predictions.

\subsection{Structured Error Correction (S3R-EC)}
\label{subsec:s3r_opt}
By Proposition~\ref{prop:coherent_dx}, the cost of sharing is the covariance of the errors within each token, which enters the activation gradient and travels to every lower layer. S3R-EC removes this covariance by subtracting the known mean error, the operation analyzed in Theorem~\ref{thm:centering}, along both axes of the error of each example.

\subsubsection{Two-Way Correction}
The idea is simple (Fig.~\ref{fig:ec_example}). The quantizer knows $\mathbf{G}$ and $\widetilde{\mathbf{G}}$, and hence the rounding error $\mathbf{E} = \widetilde{\mathbf{G}} - \mathbf{G}$, exactly. S3R-EC takes two averages of $\mathbf{E}$: the mean error of each token over its channels, which carries the covariance created by the shared threshold, and the mean error of each channel over the tokens of an example. After the unchanged INT8 GEMMs, it subtracts their effect: from the activation gradient, the averages times $\mathbf{W}$, which for the token mean is the mean times the precomputed row sums $\mathbf{W}\mathbf{1}_D$; from the weight gradient, the inputs times the same averages. What remains sums to zero along every row and every column. No extra random numbers are needed.

Formally, with $\mathbf{G}$, $\mathbf{X}$ and $\mathbf{W}$ as in~(\ref{eq:scale_fold}), the averages of example $b$ form the orthogonal projection of $\mathbf{E}_b \in \mathbb{R}^{S \times D}$ ($D = H_{\text{out}}$) onto the subspace $\mathcal{S} = \{ \mathbf{1}_S\mathbf{a}^\top + \mathbf{c}\mathbf{1}_D^\top \}$ of sums of row-constant and column-constant matrices:
\begin{equation}
\label{eq:estr}
\begin{gathered}
\bar{\mathbf{e}}_b = \tfrac{1}{S}\mathbf{E}_b^\top\mathbf{1}_S, \quad \boldsymbol{\varepsilon}_b = \tfrac{1}{D}\mathbf{E}_b\mathbf{1}_D, \quad m_b = \tfrac{1}{SD}\mathbf{1}_S^\top\mathbf{E}_b\mathbf{1}_D, \\
\mathbf{E}^{\text{str}}_b = \mathbf{1}_S\bar{\mathbf{e}}_b^\top + (\boldsymbol{\varepsilon}_b - m_b\mathbf{1}_S)\mathbf{1}_D^\top .
\end{gathered}
\end{equation}
Passing $\mathbf{E}^{\text{str}}_b$ through both backward products and subtracting the result gives
\begin{align}
\label{eq:s3r_opt_dx}
\widehat{\nabla}^{\text{corr}}_{\mathbf{X}_b} &= \widehat{\nabla}_{\mathbf{X}_b} - \mathbf{1}_S(\mathbf{W}\bar{\mathbf{e}}_b)^\top - (\boldsymbol{\varepsilon}_b - m_b\mathbf{1}_S)(\mathbf{W}\mathbf{1}_D)^\top, \\
\label{eq:s3r_opt_gc}
\bar{\mathbf{g}}_{\mathbf{W}}^{\text{corr}} &= \bar{\mathbf{g}}_{\mathbf{W}} - \frac{1}{BS}\sum_{b=1}^{B}\Big[\bar{\mathbf{x}}_b\mathbf{e}_b^\top + \mathbf{X}_b^\top(\boldsymbol{\varepsilon}_b - m_b\mathbf{1}_S)\mathbf{1}_D^\top\Big],
\end{align}
where $\bar{\mathbf{x}}_b = \frac{1}{S}\mathbf{X}_b^\top\mathbf{1}_S$ and $\mathbf{e}_b = S\bar{\mathbf{e}}_b$. In (\ref{eq:s3r_opt_gc}), the first term removes the mean error of each channel over the tokens of an example, and the second removes the mean error that each token shares across its channels. The corrected gradients go to the unchanged optimizer, AdamW~\cite{loshchilov2019decoupled} in our experiments. The correction keeps both gradients unbiased and leaves the remaining errors of every token and every channel with a nonpositive average covariance:

\begin{proposition}[Properties of the Two-Way Correction]
\label{prop:two_way}
Consider any threshold topology with marginally uniform thresholds, and fix the residuals. Let $\mathbf{N}_b = \mathbf{E}_b - \mathbf{E}^{\text{str}}_b$ be the error that remains. Then:
\begin{enumerate}
    \item \textbf{Unbiasedness:} $\mathbb{E}_{\mathbf{u}}[\widehat{\nabla}^{\text{corr}}_{\mathbf{X}}] = \mathbf{G}\mathbf{W}^\top$ and $\mathbb{E}_{\mathbf{u}}[\bar{\mathbf{g}}^{\text{corr}}_{\mathbf{W}}] = \nabla_{\mathbf{W}}\mathcal{L}$.
    \item \textbf{Orthogonal Decomposition:} $\mathbf{E}^{\text{str}}_b$ is the orthogonal projection of $\mathbf{E}_b$ onto $\mathcal{S}$. Hence $\mathbf{N}_b^\top\mathbf{1}_S = \mathbf{0}$, $\mathbf{N}_b\mathbf{1}_D = \mathbf{0}$, and $\|\mathbf{N}_b\|_F \le \|\mathbf{E}_b\|_F$ on every realization.
    \item \textbf{Covariance Removal:} Within every token and within every channel of example $b$, the remaining errors sum to zero, so their average pairwise covariance is $-\frac{1}{n-1}$ times their average variance, with $n = D$ for a token and $n = S$ for a channel. In particular, the shared term $\varepsilon_{b,s}\mathbf{W}\mathbf{1}_D$ of every token's activation-gradient error is removed.
    \item \textbf{Variance Reduction:} $\sum_{s,j}\text{Var}(N_{b,s,j}) = \sum_{s,j}\text{Var}(E_{b,s,j}) - \mathbb{E}\|\mathbf{E}^{\text{str}}_b\|_F^2$.
\end{enumerate}
\end{proposition}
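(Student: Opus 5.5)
The plan is to reduce every claim to one fact: the map $\mathbf{E}_b \mapsto \mathbf{E}^{\text{str}}_b$ is a fixed linear orthogonal projection $\Pi$ on $\mathbb{R}^{S\times D}$ with the Frobenius inner product. I will establish this first, because Parts~1, 3 and 4 follow from it.

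For Part~2, I will check the two defining properties directly from (\ref{eq:estr}).
\begin{itemize}
\item[(i)] $\mathbf{E}^{\text{str}}_b \in \mathcal{S}$ by construction.
\item[(ii)] $\mathbf{N}_b = \mathbf{E}_b - \mathbf{E}^{\text{str}}_b$ is Frobenius-orthogonal to $\mathcal{S}$.
\end{itemize}
For (ii) it suffices to show $\mathbf{N}_b^\top\mathbf{1}_S = \mathbf{0}$ and $\mathbf{N}_b\mathbf{1}_D = \mathbf{0}$. These two identities are what orthogonality to $\mathcal{S}$ means: $\langle \mathbf{N}, \mathbf{1}_S\mathbf{a}^\top\rangle = \mathbf{a}^\top \mathbf{N}^\top\mathbf{1}_S$ and $\langle \mathbf{N}, \mathbf{c}\mathbf{1}_D^\top\rangle = \mathbf{c}^\top\mathbf{N}\mathbf{1}_D$.

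The column sums are $\mathbf{E}^{\text{str}\top}_b\mathbf{1}_S = S\bar{\mathbf{e}}_b + \mathbf{1}_D(\mathbf{1}_S^\top\boldsymbol{\varepsilon}_b - S m_b) = S\bar{\mathbf{e}}_b$, since $\mathbf{1}_S^\top\boldsymbol{\varepsilon}_b = S m_b$. So they match $\mathbf{E}_b^\top\mathbf{1}_S$. The row sums are $\mathbf{1}_S(\bar{\mathbf{e}}_b^\top\mathbf{1}_D) + D(\boldsymbol{\varepsilon}_b - m_b\mathbf{1}_S) = D\boldsymbol{\varepsilon}_b$, using $\bar{\mathbf{e}}_b^\top\mathbf{1}_D = D m_b$. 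This is the only bookkeeping step where a sign or factor could slip, so I will do it carefully.

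Pythagoras then gives $\|\mathbf{E}_b\|_F^2 = \|\mathbf{E}^{\text{str}}_b\|_F^2 + \|\mathbf{N}_b\|_F^2$ on every realization. This yields the norm bound.

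For Part~1, I will rewrite the corrected gradients. By (\ref{eq:s3r_opt_dx}) and (\ref{eq:s3r_opt_gc}), they equal the uncorrected ones minus $\mathbf{E}^{\text{str}}_b\mathbf{W}^\top$ and minus $\frac{1}{BS}\sum_b \mathbf{X}_b^\top\mathbf{E}^{\text{str}}_b$, respectively. For the weight gradient I verify $\mathbf{X}_b^\top\mathbf{1}_S\bar{\mathbf{e}}_b^\top = \bar{\mathbf{x}}_b\mathbf{e}_b^\top$. Both correction terms are linear in $\mathbf{E}_b$ through $\Pi$, with $\mathbf{W}$ and $\mathbf{X}$ fixed. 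Lemma~\ref{lemma:unbiasedness} gives $\mathbb{E}_{\mathbf{u}}[\mathbf{E}] = \mathbf{0}$ for any marginally uniform topology, so the expected corrections vanish. The unbiasedness of the uncorrected products, also from Lemma~\ref{lemma:unbiasedness}, then transfers to the corrected ones.

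For Part~3, I take a token row $\mathbf{N}_{b,s,\cdot}$, or a channel column, with $n$ entries summing to zero surely. The variance of the sum is then $0 = n\bar{\sigma}^2 + n(n-1)\bar{c}$, which gives $\bar{c} = -\bar{\sigma}^2/(n-1)$. For the shared term, note that the token's activation-gradient error after correction is $\mathbf{W}\mathbf{N}_{b,s,\cdot}^\top$. Its component along $\mathbf{W}\mathbf{1}_D$ arising from the token mean is $\frac{1}{D}(\mathbf{1}_D^\top\mathbf{N}_{b,s,\cdot}^\top)\mathbf{W}\mathbf{1}_D = 0$. Equivalently, (\ref{eq:s3r_opt_dx}) subtracts $\varepsilon_{b,s}\mathbf{W}\mathbf{1}_D$ exactly, with the $m_b$ term cancelled by the row-constant part.

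For Part~4, I take expectations in the realization-wise Pythagorean identity. Since $\mathbb{E}[\mathbf{E}_b] = \mathbf{0}$ and $\Pi$ is linear, $\mathbb{E}[\mathbf{N}_b] = \mathbf{0}$. Hence $\mathbb{E}\|\mathbf{N}_b\|_F^2 = \sum\text{Var}(N)$ and $\mathbb{E}\|\mathbf{E}_b\|_F^2 = \sum\text{Var}(E)$, which gives the stated identity.

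I expect no real obstacle. The main care point is confirming that the $m_b$ adjustment makes $\Pi$ an honest projection: $\mathbf{1}_S\mathbf{1}_D^\top$ lies in both the row-constant and column-constant families, so it must not be counted twice. This is precisely what the row- and column-sum check in Part~2 certifies.
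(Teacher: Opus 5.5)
Your proposal is correct and follows essentially the same route as the paper's proof. You verify the zero row and column sums of $\mathbf{N}_b$ via $\mathbf{1}_S^\top\boldsymbol{\varepsilon}_b = Sm_b$ and $\bar{\mathbf{e}}_b^\top\mathbf{1}_D = Dm_b$, then deduce orthogonality to $\mathcal{S}$ and the Pythagorean identity, use the linearity of $\mathbf{E}_b \mapsto \mathbf{E}^{\text{str}}_b$ together with $\mathbb{E}[\mathbf{E}] = \mathbf{0}$ for unbiasedness and variance reduction, and argue from the zero variance of a surely-zero sum for the covariance claim. Your explicit check that $\bar{\mathbf{x}}_b\mathbf{e}_b^\top = \mathbf{X}_b^\top\mathbf{1}_S\bar{\mathbf{e}}_b^\top$, which shows both corrections are exactly $\mathbf{E}^{\text{str}}_b$ passed through the two backward products, is a detail the paper leaves implicit.
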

The covariance statement is the first statement of Theorem~\ref{thm:centering} applied to every row and every column of the error, and under per-token thresholds the rows carry nearly all of the removed covariance, by Corollary~\ref{cor:structured_topologies}.

\textbf{Relation to dithering.} The correction is a control variate, akin to bias correction in post-training quantization \cite{nagel2019data}. SR rounds to nearest after adding the dither $\Delta(r - \frac{1}{2})$; because the dither is shared, the error it causes is known and can be subtracted, as in subtractive dither \cite{gray1993dithered}. Correlated quantization \cite{suresh2022correlated} likewise gains from shared randomness. The diagnostics agree in direction: with the two-way correction, $\Phi_{\text{2D}}$ has a bottom-layer activation-gradient error of 0.476 vs.\ 0.572 for independent SR (Table~\ref{tab:grad_diag}).

\subsubsection{Cost and the dX-Only Variant}
The correction reuses $\mathbf{E}$ and adds only per-example reductions and small products. With the batch sharded, its weight-gradient term all-gathers small per-example factors such as $\bar{\mathbf{x}}_b$ and $\bar{\mathbf{e}}_b$ instead of reducing a weight-sized \texttt{f32} partial, which lowers the \texttt{f32} all-reduce volume of a compiled two-layer model from 422.6\,MiB to 3.1\,MiB per step, at the cost of about 8\,MiB of small factor all-gathers. Under $\Phi_{\text{2D}}$, the errors of different tokens are uncorrelated, so centering over tokens finds almost no covariance to remove (Corollary~\ref{cor:structured_topologies}), and the activation-gradient term (\ref{eq:s3r_opt_dx}) alone keeps nearly all of the benefit, including the removal of the shared term of every token's activation-gradient error. Intuitively, the token offsets are independent under $\Phi_{\text{2D}}$, so they largely average out in the weight gradient, a sum over tokens, but not in each token's activation gradient (Fig.~\ref{fig:ec_example}). This dX-only correction matches the two-way correction within seed noise (Section~\ref{subsec:convergence}) and adds 1.1--1.8\% to the S3R step time of the three dense models. $\Phi_{\text{bcast}}$, whose tokens share thresholds, needs both terms; Section~\ref{subsec:results} reports the remaining costs.

\section{Experiments}
\label{sec:experiments}

We report model quality first (Section~\ref{subsec:convergence}) and speed second (Section~\ref{subsec:results}), because a faster configuration is useful only if it trains as well as its baseline.

\subsection{Setup}
\label{subsec:setup}
All experiments run on Google Cloud TPU v5p with JAX/XLA \cite{bradbury2018jax}. We use three workloads.
\begin{enumerate}
    \item \textbf{Convergence runs.} A 12-layer Llama-style decoder is trained on English C4 \cite{raffel2020exploring} for 2000 steps ($S = 2048$, $B = 8$, 32.8M tokens). We report the mean and sample standard deviation of the final validation loss over three seeds, with each configuration at its best learning rate on seed 0.
    \item \textbf{Single-block FFN benchmarks.} Forward and backward passes of one SwiGLU FFN block at nine shapes ($W_1$--$W_9$) taken from models of 1B to 70B parameters (Table~\ref{tab:ffn_step_time}).
    \item \textbf{Full training steps.} MaxText training steps of five models from four families \cite{grattafiori2024llama, qwen2025qwen3, gemma2025gemma3, liu2024deepseek}: the dense Llama 3.1-8B, Qwen3-8B, and Gemma 3-12B, and the mixture-of-experts (MoE) models Qwen3-30B-A3B and DeepSeek-V2-Lite (16B). The devices form a two-dimensional mesh with Fully Sharded Data Parallelism (FSDP) \cite{rajbhandari2020zero} and Tensor Parallelism (TP) \cite{shoeybi2019megatron}, orchestrated via Pathways \cite{barham2022pathways}. Each chip processes 4 synthetic sequences of 4096 tokens per step, with full rematerialization and flash attention \cite{dao2022flashattention}. Step times are medians over 20 steps after 12 warm-up steps and differ by at most 0.6\% between repeated runs.
\end{enumerate}

\subsection{Compared Configurations}
All INT8 configurations quantize as in Section~\ref{sec:method}: RTNE in the forward pass, one per-tensor step $\Delta$ for the backward gradient, and int32 accumulation. In the full training steps, each GEMM rounds its output to \texttt{bf16} per shard, so the TP partial sums are reduced in \texttt{bf16}, as in the BF16 baseline. The flash-attention products, the embedding, and the logits projection stay in \texttt{bf16}. They differ only in how the upstream gradient is rounded:
\begin{itemize}
    \item \textbf{BF16} and \textbf{INT8 RTNE}: the 16-bit baseline and deterministic rounding.
    \item \textbf{Independent SR (Threefry)}: one 32-bit random threshold per element from JAX's default Threefry generator, which S3R also uses. In full training steps, it quantizes the upstream gradient once per backward product, as in AQT.
    \item \textbf{S3R ($\Phi_{\text{2D}}$)}, the default with one threshold per token, and \textbf{S3R ($\Phi_{\text{bcast}}$)}, the ablation with one threshold per example.
    \item \textbf{S3R-EC, two-way} and \textbf{S3R-EC, dX-only}: S3R with the correction of Section~\ref{subsec:s3r_opt}, applied to both backward products or to the activation gradient only.
\end{itemize}

\subsection{Model Quality}
\label{subsec:convergence}

\begin{figure}[t]
\centering
\includegraphics[width=\columnwidth]{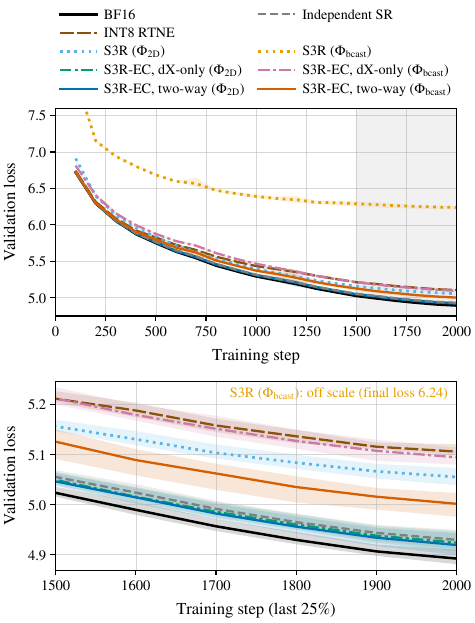}
\caption{Validation loss versus training step for all configurations of Table~\ref{tab:convergence}, each at its best learning rate. Lines show the mean over 3 seeds, and shaded bands span the seed minimum to maximum. The bottom panel enlarges the last 25\% of steps. S3R ($\Phi_{\text{bcast}}$) lies off this scale (final loss 6.24).}
\label{fig:convergence_curves}
\vspace{-6pt}
\end{figure}

\begin{table}[t!]
\centering
\caption{Final Validation Loss of the 12-Layer Model after 2000 Steps (32.8M C4 Tokens) with 8-Bit Per-Tensor Backward Gradients. Mean $\pm$ Sample Std over 3 Seeds at Each Configuration's Best Learning Rate, Chosen on Seed 0. All Configurations Use AdamW. $^\dagger$Lowest Learning Rate Tried}
\label{tab:convergence}
\resizebox{\columnwidth}{!}{%
\begin{tabular}{@{} l c c r @{}}
\toprule
Configuration & Best LR & Val.\ loss & Gap to BF16 \\
\midrule
BF16 & $6 \times 10^{-4}$ & $4.892 \pm 0.015$ & --- \\
INT8 RTNE & $6 \times 10^{-4}$ & $5.106 \pm 0.013$ & $+0.214$ \\
Independent SR & $6 \times 10^{-4}$ & $4.929 \pm 0.019$ & $+0.037$ \\
\midrule
S3R ($\Phi_{\text{2D}}$) & $3 \times 10^{-4}$ & $5.055 \pm 0.017$ & $+0.163$ \\
S3R-EC, dX-only ($\Phi_{\text{2D}}$) & $6 \times 10^{-4}$ & $4.924 \pm 0.021$ & $+0.032$ \\
S3R-EC, two-way ($\Phi_{\text{2D}}$) & $6 \times 10^{-4}$ & $4.919 \pm 0.022$ & $+0.027$ \\
\midrule
S3R ($\Phi_{\text{bcast}}$) & $3.75 \times 10^{-5}$\,$^\dagger$ & $6.239 \pm 0.026$ & $+1.347$ \\
S3R-EC, dX-only ($\Phi_{\text{bcast}}$) & $6 \times 10^{-4}$ & $5.094 \pm 0.018$ & $+0.202$ \\
S3R-EC, two-way ($\Phi_{\text{bcast}}$) & $6 \times 10^{-4}$ & $5.002 \pm 0.024$ & $+0.109$ \\
\bottomrule
\end{tabular}%
}
\vspace{-6pt}
\end{table}

\begin{table}[t!]
\centering
\caption{Relative Stochastic Gradient Error (Frobenius Norm Relative to the BF16 Gradient) on a BF16 Checkpoint at Step 1000 (LR $6 \times 10^{-4}$, Seed 0), Pooled over 4--9 Fixed Training Batches. Layer 0 Is the Bottom of the 12-Layer Model. The dW-Only Correction (Corr.) Keeps Only the Token-Mean Term of the Weight-Gradient Correction. The Last Two Rows Choose the Thresholds of the Two Backward Products Separately and Use the dW-Only Correction}
\label{tab:grad_diag}
\resizebox{\columnwidth}{!}{%
\begin{tabular}{@{} l c c c c @{}}
\toprule
 & \multicolumn{2}{c}{dX error} & dW error & Emb. \\
Configuration & layer 0 & layer 11 & layer 0 & error \\
\midrule
Independent SR & 0.572 & 0.111 & 0.456 & 0.457 \\
\midrule
S3R ($\Phi_{\text{2D}}$) & 0.877 & 0.228 & 0.691 & 0.703 \\
S3R ($\Phi_{\text{2D}}$), dW-only corr. & 0.882 & 0.228 & 0.694 & 0.706 \\
S3R-EC, two-way ($\Phi_{\text{2D}}$) & 0.476 & 0.103 & 0.377 & 0.378 \\
\midrule
S3R ($\Phi_{\text{bcast}}$) & 3.157 & 0.313 & 5.141 & 5.285 \\
S3R ($\Phi_{\text{bcast}}$), dW-only corr. & 3.150 & 0.313 & 4.529 & 5.262 \\
S3R-EC, two-way ($\Phi_{\text{bcast}}$) & 0.521 & 0.105 & 0.551 & 0.558 \\
\midrule
dX $\Phi_{\text{2D}}$, dW $\Phi_{\text{bcast}}$ & 0.893 & 0.227 & 0.746 & 0.714 \\
dX $\Phi_{\text{bcast}}$, dW $\Phi_{\text{2D}}$ & 3.494 & 0.261 & 5.979 & 6.849 \\
\bottomrule
\end{tabular}%
}
\vspace{-6pt}
\end{table}

Table~\ref{tab:convergence} lists the final validation loss of every configuration, and Fig.~\ref{fig:convergence_curves} shows the loss curves of all of them.

\textbf{SR is still needed at 8 bits.} With deterministic rounding, INT8 RTNE ends 0.176 above independent SR (Welch \mbox{$t \approx 13.2$}) and 0.214 above BF16 (Table~\ref{tab:convergence}).

\textbf{Sharing hurts, mainly through the activation gradient.} Without a correction, S3R ($\Phi_{\text{2D}}$) ends 0.126 above independent SR. S3R ($\Phi_{\text{bcast}}$) does not train usefully, and its loss keeps improving as the learning rate decreases, down to the lowest rate we tried (Table~\ref{tab:convergence}). A causal ablation in the last two rows of Table~\ref{tab:grad_diag} locates the damage. These rows quantize $\mathbf{G}$ separately for the two backward products. Per-example thresholds on the activation-gradient product alone reproduce the gradient error of S3R ($\Phi_{\text{bcast}}$), and per-token thresholds there remove almost all of it, whichever thresholds the weight-gradient product uses.

\textbf{With per-token thresholds, S3R-EC restores quality within seed noise.} S3R-EC, two-way ($\Phi_{\text{2D}}$) reaches \mbox{$4.919 \pm 0.022$}, against \mbox{$4.929 \pm 0.019$} for independent SR. Its seed band overlaps that of independent SR over the last 25\% of training, and its best learning rate returns to that of BF16. With per-example thresholds, the correction closes most of the gap but not all of it. It stays 0.073 above independent SR (Welch $t \approx 4.1$) and 0.109 above BF16 (Table~\ref{tab:convergence}, Fig.~\ref{fig:convergence_curves}).

\textbf{The dX-only correction suffices for $\Phi_{\text{2D}}$ but not for $\Phi_{\text{bcast}}$.} With per-token thresholds, the dX-only and two-way variants end within seed noise of each other. Because it skips the weight-gradient term, the dX-only variant is also cheaper in full training steps (Section~\ref{subsec:results}). With per-example thresholds, correcting the activation gradient alone recovers most of the loss, but the dX-only variant stays 0.092 above the two-way correction (Welch $t = 5.4$; Table~\ref{tab:convergence}). This pattern agrees with the analysis. Per-token thresholds leave the errors of different tokens uncorrelated, so the weight-gradient half of the correction, which centers over tokens, has almost no covariance to remove; per-example thresholds correlate the tokens, and this half then removes their covariance too (Corollary~\ref{cor:structured_topologies}).

\textbf{The correction lowers the gradient error itself, whereas the rounding variance barely affects the optimizer.} Table~\ref{tab:grad_diag} measures every configuration on the same BF16 checkpoint. Under $\Phi_{\text{2D}}$, the two-way correction brings every listed error below that of independent SR, for example to 0.48 against 0.57 in the bottom-layer activation gradient. Under $\Phi_{\text{bcast}}$, it lowers the bottom-layer activation-gradient error from 3.16 to 0.52, below that of independent SR, although the weight-gradient error stays higher. The rounding variance makes up at most 0.094 of the AdamW second-moment estimate $v_t$ late in training, so it barely changes the per-coordinate step size of AdamW.

\textbf{Limits.} Three seeds per configuration resolve differences of about 0.05 or more. Smaller differences remain undetermined, including the one between independent SR and the two-way S3R-EC with $\Phi_{\text{2D}}$. The learning rate of each configuration was selected on seed 0, which is also part of the mean.

\subsection{Throughput and Memory}
\label{subsec:results}
\begin{table}[t!]
\centering
\caption{Speedup over BF16 of One SwiGLU FFN Block (Forward + Backward, per Device) on TPU v5p, and Extra Device Time of S3R-EC over S3R ($\Phi_{\text{2D}}$). $B = 8$ ($4$ for $W_9$), $S = 2048$; Median over Repeated Steps. S3R-EC: Mean of Two Further Runs, Each against S3R in the Same Run.}
\label{tab:ffn_step_time}
\footnotesize
\setlength{\tabcolsep}{0pt}
\begin{tabular*}{\columnwidth}{@{\extracolsep{\fill}} l c c c c r r @{}}
\toprule
 & & \multicolumn{3}{c}{Speedup over BF16 ($\times$)} & \multicolumn{2}{c}{\begin{tabular}[b]{@{}c@{}}S3R-EC, extra time\\over S3R (\%)\end{tabular}} \\
\cmidrule(lr){3-5} \cmidrule(l){6-7}
Shape & $(H, D_{\text{ffn}})$ & \begin{tabular}[b]{@{}c@{}}INT8\\RTNE\end{tabular} & \begin{tabular}[b]{@{}c@{}}Indep.\ SR\\(Threefry)\end{tabular} & \begin{tabular}[b]{@{}c@{}}S3R\\($\Phi_{\text{2D}}$)\end{tabular} & Two-way & dX-only \\
\midrule
$W_1$ & $(512, 2048)$ & 1.12 & 0.58 & 1.08 & $+13.0$ & $+14.8$ \\
$W_2$ & $(1024, 4096)$ & 1.45 & 0.91 & 1.44 & $+7.3$ & $+9.6$ \\
$W_3$ & $(2048, 8192)$ & 1.73 & 1.29 & 1.73 & $-0.6$ & $+1.7$ \\
$W_4$ & $(3072, 12288)$ & 1.92 & 1.54 & 1.92 & $+4.8$ & $+4.1$ \\
$W_5$ & $(3840, 15360)$ & 1.84 & 1.53 & 1.83 & $+2.8$ & $+3.3$ \\
$W_6$ & $(4096, 12288)$ & 1.89 & 1.60 & 1.90 & $+2.7$ & $+3.1$ \\
$W_7$ & $(4096, 14336)$ & 1.87 & 1.56 & 1.87 & $+3.4$ & $+3.1$ \\
$W_8$ & $(5120, 17920)$ & 1.82 & 1.58 & 1.81 & $+0.7$ & $+2.7$ \\
$W_9$ & $(8192, 28672)$ & 1.81 & 1.66 & 1.81 & $+3.8$ & $+2.2$ \\
\bottomrule
\end{tabular*}
\vspace{-6pt}
\end{table}
\begin{table*}[t!]
\centering
\caption{Full Training Step (Forward, Backward, and AdamW Update) on TPU v5p with \texttt{bf16}-Output INT8 GEMMs. Median Step Time (ms) over 20 Steps after 12 Warm-Up Steps; Per-Device Batch 4, $S = 4096$. In Parentheses: Speedup over BF16 on the Dense Models. S3R-EC, Two-Way, Uses the Factor All-Gather; on the Dense GEMMs, It Also Computes a Per-Channel Rounding-Variance Statistic That the Update Does Not Use, So Its Times Are Slightly Pessimistic. On the MoE Models, the Expert GEMMs Receive Only the Channel-Mean Part of the Correction.}
\label{tab:e2e_step}
\small
\setlength{\tabcolsep}{0pt}
\begin{tabular*}{\textwidth}{@{\extracolsep{\fill}} l r r r r r @{}}
\toprule
 & \multicolumn{3}{c}{Dense models} & \multicolumn{2}{c}{MoE models} \\
\cmidrule(lr){2-4} \cmidrule(l){5-6}
Configuration & Llama 3.1-8B & Qwen3-8B & Gemma 3-12B & Qwen3-30B-A3B & DeepSeek-V2-Lite (16B) \\
\midrule
BF16 & 3862 & 4018 & 6550 & 6454 & 3659 \\
INT8 RTNE & 2926 ($1.320\times$) & 3110 ($1.292\times$) & 4822 ($1.358\times$) & 7305 & 3842 \\
Independent SR (Threefry) & 3350 ($1.153\times$) & 3607 ($1.114\times$) & 5526 ($1.185\times$) & 7914 & 4269 \\
S3R ($\Phi_{\text{2D}}$) & 2903 ($1.331\times$) & 3104 ($1.294\times$) & 4853 ($1.350\times$) & 7286 & 3840 \\
S3R-EC, two-way & 3008 ($1.284\times$) & 3243 ($1.239\times$) & 4961 ($1.320\times$) & 7621 & 4029 \\
S3R-EC, dX-only & 2935 ($1.316\times$) & 3160 ($1.272\times$) & 4911 ($1.334\times$) & 7548 & 3984 \\
\midrule
\multicolumn{6}{@{}l}{\textit{Speedup over independent SR (Threefry)}} \\
S3R & $1.154\times$ & $1.162\times$ & $1.139\times$ & $1.086\times$ & $1.112\times$ \\
S3R-EC, two-way & $1.114\times$ & $1.112\times$ & $1.114\times$ & $1.038\times$ & $1.060\times$ \\
S3R-EC, dX-only & $1.141\times$ & $1.141\times$ & $1.125\times$ & $1.049\times$ & $1.072\times$ \\
\bottomrule
\end{tabular*}
\vspace{-6pt}
\end{table*}

\textbf{INT8 nearly halves the GEMM time, and vector work decides how much of this saving remains.} The S8 GEMMs run on the MXU and take the same time in every uncorrected INT8 configuration, $47.55$--$47.63$\,ms for one FFN block at $W_9$ against $90.29$\,ms for BF16. Scaling, rounding, threshold generation and the correction run on the VPU and do not get cheaper with a narrower format. We convert extra device time to int32 VPU operations per upstream-gradient element at the measured $14.3 \times 10^{12}$ operations/s.

\textbf{Per-element Threefry is the expensive step, and S3R removes it.} Independent SR hashes one 32-bit word per gradient element, which costs 120--133 operations per element over INT8 RTNE at every shape and takes 56.5\% of the busy time at $W_1$ and 9.9\% at $W_9$. It is therefore slower than BF16 at $W_1$ and $W_2$ (Table~\ref{tab:ffn_step_time}). S3R ($\Phi_{\text{2D}}$) hashes one word per four tokens and broadcasts the thresholds in vector registers inside the quantizer fusion, so they are never written to memory. It costs $-2$ to $+6$ operations per element, runs within 1\% of INT8 RTNE at $W_2$--$W_9$ and 4.3\% slower at $W_1$, and is $1.08$--$1.92\times$ faster than BF16.

\textbf{S3R-EC adds small reductions and products.} It reuses the rounding error inside the quantizer fusion, and its per-token and per-channel reductions, matrix-vector products and rank-1 updates leave the GEMM time nearly unchanged. On a single block, both variants add 15--23 operations per element at most shapes and 8--14\% temporary memory. This is 13--15\% of the device time of S3R at $W_1$, 7--10\% at $W_2$ and at most 5\% from $W_3$ on, so only $W_1$ becomes slower than BF16 ($0.94$--$0.96\times$; Table~\ref{tab:ffn_step_time}). The two-way variant does more arithmetic but is not always slower, because both variants change how XLA fuses the GEMMs.

\textbf{On the dense models, the GEMM saving reaches the full step.} In a Llama 3.1-8B step, INT8 RTNE spends 1182\,ms in the GEMMs against 2236\,ms for BF16, and 249 against 174\,ms in vector fusions. S3R matches INT8 RTNE to within 1\% on all five models and is $1.29$--$1.35\times$ faster than BF16 on the dense models (Table~\ref{tab:e2e_step}), but without a correction it trails independent SR in quality (Section~\ref{subsec:convergence}). Independent SR also quantizes the upstream gradient twice in full steps, doubling the quantization work and the S8 writes, and is only $1.11$--$1.19\times$ faster than BF16. S3R-EC, two-way, is $1.24$--$1.32\times$ faster than BF16 and $1.11\times$ faster than independent SR. Its weight-gradient term needs extra reductions and small factor all-gathers (Section~\ref{subsec:s3r_opt}) and adds 2.2--4.5\% to the S3R step; the dX-only correction skips it and adds 1.1--1.8\%. By XLA's compile-time estimate, S3R needs the same full-step device memory as RTNE on the dense models, and S3R-EC, two-way, at most 0.4\,GB more.

\textbf{On the MoE models, S3R and S3R-EC are also faster than independent SR.} S3R is $1.09$--$1.11\times$ faster, runs as fast as INT8 RTNE, and needs 0.4 and 0.7\,GB more device memory than RTNE. With the expert-path correction fused into Pallas kernels over the expert-sorted rows, the two-way correction adds 4.6\% and 4.9\% to the S3R step, so S3R-EC is $1.04\times$ and $1.06\times$ faster than independent SR; with the dX-only correction, it is $1.05\times$ and $1.07\times$ faster.
\section{Conclusions}
\label{sec:conclusion}

Stochastic rounding keeps low-precision gradients unbiased, but standard SR draws one random threshold per element on the vector units, which can make INT8 training slower than BF16 training. Sharing thresholds removes this cost but shifts the rounding errors of all elements that share a threshold by a common offset. We showed that this offset harms training mainly through the activation gradient, which carries it to every lower layer, so it must be removed where it is produced. S3R quantizes each upstream gradient once for both backward products and uses one threshold per token. S3R-EC subtracts the structured part of the known rounding error from the activation and weight gradients, keeps them unbiased, and leaves the optimizer unchanged. In pre-training, uncorrected sharing degrades quality, whereas S3R-EC removes this loss and reaches the quality of independent SR. At this quality, S3R-EC makes full training steps of three dense LLMs $1.24$--$1.32\times$ faster than BF16 and about $1.11\times$ faster than independent SR.

\bibliographystyle{IEEEtran}
\bibliography{example_paper}

@IEEEtranBSTCTL{IEEEexample:BSTcontrol,
  CTLuse_forced_etal = "yes",
  CTLmax_names_forced_etal = "6",
  CTLnames_show_etal = "1"
}

@article{han2015learning,
  title={Learning both weights and connections for efficient neural network},
  author={Han, Song and Pool, Jeff and Tran, John and Dally, William},
  journal={Advances in neural information processing systems},
  volume={28}, 
  year={2015}
}

@article{raffel2020exploring,
  title={Exploring the limits of transfer learning with a unified text-to-text transformer},
  author={Raffel, Colin and Shazeer, Noam and Roberts, Adam and Lee, Katherine and Narang, Sharan and Matena, Michael and Zhou, Yanqi and Li, Wei and Liu, Peter J},
  journal={Journal of machine learning research},
  volume={21},
  number={140},
  pages={1--67},
  year={2020}
}

@article{liu2024deepseek,
  title={Deepseek-v3 technical report},
  author={Liu, Aixin and Feng, Bei and Xue, Bing and Wang, Bingxuan and Wu, Bochao and Lu, Chengda and Zhao, Chenggang and Deng, Chengqi and Zhang, Chenyu and Ruan, Chong and others},
  journal={arXiv preprint arXiv:2412.19437},
  year={2024}
}

@article{grattafiori2024llama,
  title={The {Llama} 3 herd of models},
  author={Grattafiori, Aaron and Dubey, Abhimanyu and Jauhri, Abhinav and Pandey, Abhinav and Kadian, Abhishek and Al-Dahle, Ahmad and Letman, Aiesha and Mathur, Akhil and Schelten, Alan and Vaughan, Alex and others},
  journal={arXiv preprint arXiv:2407.21783},
  year={2024}
}

@inproceedings{gupta2015deep,
  title={Deep learning with limited numerical precision},
  author={Gupta, Suyog and Agrawal, Ankur and Gopalakrishnan, Kailash and Narayanan, Pritish},
  booktitle={International conference on machine learning},
  pages={1737--1746},
  year={2015},
  organization={PMLR}
}

@article{wang2018training,
  title={Training deep neural networks with 8-bit floating point numbers},
  author={Wang, Naigang and Choi, Jungwook and Brand, Daniel and Chen, Chia-Yu and Gopalakrishnan, Kailash},
  journal={Advances in neural information processing systems},
  volume={31},
  year={2018}
}

@article{micikevicius2022fp8,
  title={Fp8 formats for deep learning},
  author={Micikevicius, Paulius and Stosic, Dusan and Burgess, Neil and Cornea, Marius and Dubey, Pradeep and Grisenthwaite, Richard and Ha, Sangwon and Heinecke, Alexander and Judd, Patrick and Kamalu, John and others},
  journal={arXiv preprint arXiv:2209.05433},
  year={2022}
}

@inproceedings{wiedemann2020dithered,
  title={Dithered backprop: A sparse and quantized backpropagation algorithm for more efficient deep neural network training},
  author={Wiedemann, Simon and Mehari, Temesgen and Kepp, Kevin and Samek, Wojciech},
  booktitle={2020 IEEE/CVF Conference on Computer Vision and Pattern Recognition Workshops (CVPRW)},
  pages={3096--3104},
  year={2020},
  organization={IEEE}
}

@inproceedings{yuan2022you,
  title={You already have it: A generator-free low-precision dnn training framework using stochastic rounding},
  author={Yuan, Geng and Chang, Sung-En and Jin, Qing and Lu, Alec and Li, Yanyu and Wu, Yushu and Kong, Zhenglun and Xie, Yanyue and Dong, Peiyan and Qin, Minghai and others},
  booktitle={European Conference on Computer Vision},
  pages={34--51},
  year={2022},
  organization={Springer}
}

@article{brown2020language,
  title={Language models are few-shot learners},
  author={Brown, Tom and Mann, Benjamin and Ryder, Nick and Subbiah, Melanie and Kaplan, Jared D and Dhariwal, Prafulla and Neelakantan, Arvind and Shyam, Pranav and Sastry, Girish and Askell, Amanda and others},
  journal={Advances in neural information processing systems},
  volume={33},
  pages={1877--1901},
  year={2020}
}

@article{touvron2023llama,
  title={Llama 2: Open foundation and fine-tuned chat models},
  author={Touvron, Hugo and Martin, Louis and Stone, Kevin and Albert, Peter and Almahairi, Amjad and Babaei, Yasmine and Bashlykov, Nikolay and Batra, Soumya and Bhargava, Prajjwal and Bhosale, Shruti and others},
  journal={arXiv preprint arXiv:2307.09288},
  year={2023}
}

@article{sun2020ultra,
  title={Ultra-low precision {4-bit} training of deep neural networks},
  author={Sun, Xiao and Wang, Naigang and Chen, Chia-Yu and Ni, Jiamin and Agrawal, Ankur and Cui, Xiaodong and Venkataramani, Swagath and El Maghraoui, Kaoutar and Srinivasan, Vijayalakshmi Viji and Gopalakrishnan, Kailash},
  journal={Advances in Neural Information Processing Systems},
  volume={33},
  pages={1796--1807},
  year={2020}
}

@article{hubara2017quantized,
  title={Quantized neural networks: Training neural networks with low precision weights and activations},
  author={Hubara, Itay and Courbariaux, Matthieu and Soudry, Daniel and El-Yaniv, Ran and Bengio, Yoshua},
  journal={journal of machine learning research},
  volume={18},
  number={187},
  pages={1--30},
  year={2018}
}

@article{qwen2025qwen3,
  title={Qwen3 technical report},
  author={Yang, An and Li, Anfeng and Yang, Baosong and Zhang, Beichen and Hui, Binyuan and Zheng, Bo and Yu, Bowen and Gao, Chang and Huang, Chengen and Lv, Chenxu and others},
  journal={arXiv preprint arXiv:2505.09388},
  year={2025}
}

@article{gemma2025gemma3,
  title={Gemma 3 technical report},
  author={Team, Gemma and Kamath, Aishwarya and Ferret, Johan and Pathak, Shreya and Vieillard, Nino and Merhej, Ramona and Perrin, Sarah and Matejovicova, Tatiana and Ram{\'e}, Alexandre and Rivi{\`e}re, Morgane and others},
  journal={arXiv preprint arXiv:2503.19786},
  year={2025}
}

@article{loshchilov2019decoupled,
  title={Decoupled weight decay regularization},
  author={Loshchilov, Ilya and Hutter, Frank},
  journal={arXiv preprint arXiv:1711.05101},
  year={2017}
}

@inproceedings{han2016eie,
  title={{EIE}: Efficient Inference Engine on Compressed Deep Neural Network},
  author={Han, Song and Liu, Xingyu and Mao, Huizi and Pu, Jing and Pedram, Ardavan and Horowitz, Mark A and Dally, William J},
  booktitle={ACM/IEEE 43rd Annual International Symposium on Computer Architecture (ISCA)},
  pages={243--254},
  year={2016},
  organization={IEEE}
}

@inproceedings{xiao2023smoothquant,
  title={Smoothquant: Accurate and efficient post-training quantization for large language models},
  author={Xiao, Guangxuan and Lin, Ji and Seznec, Mickael and Wu, Hao and Demouth, Julien and Han, Song},
  booktitle={International conference on machine learning},
  pages={38087--38099},
  year={2023},
  organization={PMLR}
}

@inproceedings{lin2024awq,
  title={{AWQ}: Activation-Aware Weight Quantization for On-Device {LLM} Compression and Acceleration},
  author={Lin, Ji and Tang, Jiaming and Tang, Haotian and Yang, Shang and Chen, Wei-Ming and Wang, Wei-Chen and Xiao, Guangxuan and Dang, Xingyu and Gan, Chuang and Han, Song},
  booktitle={Proceedings of Machine Learning and Systems (MLSys)},
  volume={6},
  pages={87--100},
  year={2024}
}

@article{lin2024qserve,
  title={Qserve: W4a8kv4 quantization and system co-design for efficient llm serving},
  author={Lin, Yujun and Tang, Haotian and Yang, Shang and Zhang, Zhekai and Xiao, Guangxuan and Gan, Chuang and Han, Song},
  journal={Proceedings of Machine Learning and Systems},
  volume={7},
  year={2025}
}

@article{xi2024jetfire,
  title={Jetfire: Efficient and accurate transformer pretraining with int8 data flow and per-block quantization},
  author={Xi, Haocheng and Chen, Yuxiang and Zhao, Kang and Teh, Kai Jun and Chen, Jianfei and Zhu, Jun},
  journal={arXiv preprint arXiv:2403.12422},
  year={2024}
}

@inproceedings{xi2024coat,
  title={COAT: Compressing optimizer states and activations for memory-efficient FP8 training},
  author={Xi, Haocheng and Cai, Han and Zhu, Ligeng and Lu, Yao and Keutzer, Kurt and Chen, Jianfei and Han, Song},
  booktitle={International Conference on Learning Representations},
  volume={2025},
  pages={42989--43011},
  year={2025}
}

@article{barham2022pathways,
  title={Pathways: Asynchronous distributed dataflow for ml},
  author={Barham, Paul and Chowdhery, Aakanksha and Dean, Jeff and Ghemawat, Sanjay and Hand, Steven and Hurt, Daniel and Isard, Michael and Lim, Hyeontaek and Pang, Ruoming and Roy, Sudip and others},
  journal={Proceedings of Machine Learning and Systems},
  volume={4},
  pages={430--449},
  year={2022}
}

@inproceedings{ethayarajh2019contextual,
  title={How Contextual are Contextualized Word Representations? {C}omparing the Geometry of {BERT}, {ELMo}, and {GPT-2} Embeddings},
  author={Ethayarajh, Kawin},
  booktitle={Proceedings of the Conference on Empirical Methods in Natural Language Processing and the International Joint Conference on Natural Language Processing (EMNLP-IJCNLP)},
  pages={55--65},
  year={2019}
}

@article{connolly2021stochastic,
  title={Stochastic Rounding and Its Probabilistic Backward Error Analysis},
  author={Connolly, Michael P. and Higham, Nicholas J. and Mary, Theo},
  journal={SIAM Journal on Scientific Computing},
  volume={43},
  number={1},
  pages={A566--A585},
  year={2021},
  doi={10.1137/20M1334796}
}

@article{croci2022stochastic,
  title={Stochastic rounding: implementation, error analysis and applications},
  author={Croci, Matteo and Fasi, Massimiliano and Higham, Nicholas J and Mary, Theo and Mikaitis, Mantas},
  journal={Royal Society Open Science},
  volume={9},
  number={3},
  pages={211631},
  year={2022}
}

@article{elarar2025probabilistic,
  title={Probabilistic Error Analysis of Limited-Precision Stochastic Rounding},
  author={El Arar, El-Mehdi and Fasi, Massimiliano and Filip, Silviu-Ioan and Mikaitis, Mantas},
  journal={SIAM Journal on Scientific Computing},
  volume={47},
  number={5},
  pages={B1227--B1249},
  year={2025},
  doi={10.1137/24M1681458}
}

@inproceedings{fitzgibbon2025stochastic,
  title={On Stochastic Rounding with Few Random Bits},
  author={Fitzgibbon, Andrew and Felix, Stephen},
  booktitle={IEEE 32nd Symposium on Computer Arithmetic (ARITH)},
  pages={133--140},
  year={2025},
  doi={10.1109/ARITH64983.2025.00029}
}

@inproceedings{mikaitis2021stochastic,
  title={Stochastic rounding: Algorithms and hardware accelerator},
  author={Mikaitis, Mantas},
  booktitle={2021 International Joint Conference on Neural Networks (IJCNN)},
  pages={1--6},
  year={2021},
  organization={IEEE}
}

@article{dexter2024stochastic,
  title={Stochastic rounding implicitly regularizes tall-and-thin matrices},
  author={Dexter, Gregory and Boutsikas, Christos and Ma, Linkai and Ipsen, Ilse CF and Drineas, Petros},
  journal={SIAM Journal on Matrix Analysis and Applications},
  volume={46},
  number={1},
  pages={341--369},
  year={2025},
  publisher={SIAM}
}

@article{gray1993dithered,
  title={Dithered Quantizers},
  author={Gray, Robert M. and Stockham, Thomas G.},
  journal={IEEE Transactions on Information Theory},
  volume={39},
  number={3},
  pages={805--812},
  year={1993},
  doi={10.1109/18.256489}
}

@inproceedings{suresh2022correlated,
  title={Correlated quantization for distributed mean estimation and optimization},
  author={Suresh, Ananda Theertha and Sun, Ziteng and Ro, Jae and Yu, Felix},
  booktitle={International Conference on Machine Learning},
  pages={20856--20876},
  year={2022},
  organization={PMLR}
}

@inproceedings{nagel2019data,
  title={Data-free quantization through weight equalization and bias correction},
  author={Nagel, Markus and Baalen, Mart van and Blankevoort, Tijmen and Welling, Max},
  booktitle={Proceedings of the IEEE/CVF international conference on computer vision},
  pages={1325--1334},
  year={2019}
}

@inproceedings{chen2020statistical,
  title={A Statistical Framework for Low-bitwidth Training of Deep Neural Networks},
  author={Chen, Jianfei and Gai, Yu and Yao, Zhewei and Mahoney, Michael W. and Gonzalez, Joseph E.},
  booktitle={Advances in Neural Information Processing Systems (NeurIPS)},
  volume={33},
  year={2020}
}

@inproceedings{ozkara2025stochastic,
  title={Stochastic Rounding for {LLM} Training: Theory and Practice},
  author={Ozkara, Kaan and Yu, Tao and Park, Youngsuk},
  booktitle={International Conference on Artificial Intelligence and Statistics (AISTATS)},
  volume={258},
  year={2025},
  organization={PMLR}
}

@article{peng2023fp8lm,
  title={{FP8-LM}: Training {FP8} Large Language Models},
  author={Peng, Houwen and Wu, Kan and Wei, Yixuan and Zhao, Guoshuai and Yang, Yuxiang and Liu, Ze and Xiong, Yifan and Yang, Ziyue and Ni, Bolin and Hu, Jingcheng and others},
  journal={arXiv preprint arXiv:2310.18313},
  year={2023}
}

@article{liu2024deepseekv3,
  title={{DeepSeek-V3} Technical Report},
  author={{DeepSeek-AI} and Liu, Aixin and Feng, Bei and Xue, Bing and Wang, Bingxuan and Wu, Bochao and others},
  journal={arXiv preprint arXiv:2412.19437},
  year={2024}
}

@inproceedings{fishman2025scaling,
  title={Scaling {FP8} Training to Trillion-Token {LLMs}},
  author={Fishman, Maxim and Chmiel, Brian and Banner, Ron and Soudry, Daniel},
  booktitle={International Conference on Learning Representations (ICLR)},
  year={2025}
}

@article{nvidia2025nvfp4,
  title={Pretraining large language models with nvfp4},
  author={Abecassis, Felix and Agrusa, Anjulie and Ahn, Dong and Alben, Jonah and Alborghetti, Stefania and Andersch, Michael and Arayandi, Sivakumar and Bjorlin, Alexis and Blakeman, Aaron and Briones, Evan and others},
  journal={arXiv preprint arXiv:2509.25149},
  year={2025}
}

@inproceedings{tseng2025training,
  title={Training {LLMs} with {MXFP4}},
  author={Tseng, Albert and Yu, Tao and Park, Youngsuk},
  booktitle={International Conference on Artificial Intelligence and Statistics (AISTATS)},
  volume={258},
  year={2025},
  organization={PMLR}
}

@article{castro2025quartet,
  title={Quartet: Native fp4 training can be optimal for large language models},
  author={Castro, Roberto and Panferov, Andrei and Tabesh, Rush and Sieberling, Oliver and Chen, Jiale and Nikdan, Mahdi and Ashkboos, Saleh and Alistarh, Dan},
  journal={Advances in Neural Information Processing Systems},
  volume={38},
  pages={43552--43572},
  year={2026}
}

@inproceedings{wortsman2023stable,
  title={Stable and Low-Precision Training for Large-Scale Vision-Language Models},
  author={Wortsman, Mitchell and Dettmers, Tim and Zettlemoyer, Luke and Morcos, Ari and Farhadi, Ali and Schmidt, Ludwig},
  booktitle={Advances in Neural Information Processing Systems (NeurIPS)},
  volume={36},
  year={2023}
}

@article{xi2023training,
  title={Training transformers with 4-bit integers},
  author={Xi, Haocheng and Li, Changhao and Chen, Jianfei and Zhu, Jun},
  journal={Advances in Neural Information Processing Systems},
  volume={36},
  pages={49146--49168},
  year={2023}
}

@article{chmiel2023accurate,
  title={Accurate neural training with 4-bit matrix multiplications at standard formats},
  author={Chmiel, Brian and Banner, Ron and Hoffer, Elad and Yaacov, Hilla Ben and Soudry, Daniel},
  journal={arXiv preprint arXiv:2112.10769},
  year={2021}
}

@inproceedings{salmon2011parallel,
  title={Parallel Random Numbers: As Easy as 1, 2, 3},
  author={Salmon, John K. and Moraes, Mark A. and Dror, Ron O. and Shaw, David E.},
  booktitle={Proceedings of the International Conference for High Performance Computing, Networking, Storage and Analysis (SC)},
  pages={1--12},
  year={2011},
  doi={10.1145/2063384.2063405}
}

@misc{nvidia2025ptx,
  author={{NVIDIA Corporation}},
  title={Parallel Thread Execution {ISA}, Release 9.0},
  howpublished={\url{https://docs.nvidia.com/cuda/parallel-thread-execution/}},
  note={Instruction \texttt{cvt}, stochastic rounding mode \texttt{.rs}},
  year={2025}
}

@misc{bradbury2018jax,
  title={JAX: composable transformations of Python+ NumPy programs},
  author={Bradbury, J},
  year={2018}
}

@inproceedings{dao2022flashattention,
  title={{FlashAttention}: Fast and Memory-Efficient Exact Attention with {IO}-Awareness},
  author={Dao, Tri and Fu, Daniel Y. and Ermon, Stefano and Rudra, Atri and R{\'e}, Christopher},
  booktitle={Advances in Neural Information Processing Systems (NeurIPS)},
  volume={35},
  year={2022}
}

@inproceedings{rajbhandari2020zero,
  title={{ZeRO}: Memory Optimizations Toward Training Trillion Parameter Models},
  author={Rajbhandari, Samyam and Rasley, Jeff and Ruwase, Olatunji and He, Yuxiong},
  booktitle={SC20: International Conference for High Performance Computing, Networking, Storage and Analysis},
  pages={1--16},
  year={2020},
  doi={10.1109/SC41405.2020.00024}
}

@article{shoeybi2019megatron,
  title={Megatron-lm: Training multi-billion parameter language models using model parallelism},
  author={Shoeybi, Mohammad and Patwary, Mostofa and Puri, Raul and LeGresley, Patrick and Casper, Jared and Catanzaro, Bryan},
  journal={arXiv preprint arXiv:1909.08053},
  year={2019}
}

\iffull
\appendices

\section{Proofs}
\label{app:proofs}

This section proves the results of Sections~\ref{subsec:analysis} and~\ref{subsec:s3r_opt}. It adds a result that the main text uses only in passing, the effect of sharing one quantized gradient (Proposition~\ref{prop:subspace_decorrelation}), and ends with numerical checks. Throughout, $\mathbf{G}$ is the gradient of the summed loss $\ell$, and $\bar{\mathbf{g}}_{\mathbf{W}} = \frac{1}{BS}\mathbf{X}^\top\widetilde{\mathbf{G}}$ estimates $\nabla_{\mathbf{W}}\mathcal{L}$ for the token-mean loss $\mathcal{L} = \ell/(BS)$.

\subsection{Proof of Lemma~\ref{lemma:unbiasedness} (Unbiasedness)}
\label{app:lemma1_proof}

Let $\mathcal{I} = \{1, \dots, B\} \times \{1, \dots, S\} \times \{1, \dots, H\}$ index a gradient tensor with $N = BSH$ entries. A topology $\Phi: [0, 1]^{d_0} \to [0, 1]^{B \times S \times H}$, with $d_0 \ll BSH$, maps a seed $\mathbf{u} \sim \mathcal{U}([0, 1]^{d_0})$ to a threshold tensor $\mathbf{R} = \Phi(\mathbf{u})$ whose entries $R_{b,s,h}$ are each marginally $\mathcal{U}(0, 1)$, with density $f_R(r) = 1$ on $[0, 1]$. The entries share seed components and are therefore correlated, but unbiasedness uses only the marginal law of each entry and the linearity of expectation.

\begin{proof}[Proof of Lemma~\ref{lemma:unbiasedness}]
\textit{Step 1: each entry is unbiased.} Fix $\mathbf{G} \in \mathbb{R}^{B \times S \times H}$, with step size $\Delta > 0$ and lower grid point $q(G_{b,s,h}) = \Delta \lfloor G_{b,s,h} / \Delta \rfloor$. The normalized fractional residual is
\begin{equation}
P_{b,s,h} = \frac{G_{b,s,h} - q(G_{b,s,h})}{\Delta} = \frac{G_{b,s,h}}{\Delta} - \left\lfloor \frac{G_{b,s,h}}{\Delta} \right\rfloor \in [0, 1).
\end{equation}
The rounded entry is $\widetilde{G}_{b,s,h} = q(G_{b,s,h}) + \Delta \mathbb{I}(P_{b,s,h} \ge R_{b,s,h})$, with error $E_{b,s,h} = \widetilde{G}_{b,s,h} - G_{b,s,h} = \Delta(\mathbb{I}(P_{b,s,h} \ge R_{b,s,h}) - P_{b,s,h})$. Because $R_{b,s,h}$ is marginally uniform, the indicator $I_{b,s,h} = \mathbb{I}(R_{b,s,h} \le P_{b,s,h})$ has mean
\begin{align}
\mathbb{E}_{\mathbf{u}}\left[I_{b,s,h}\right] &= \Pr\left(R_{b,s,h} \le P_{b,s,h}\right) \notag \\
&= \int_0^1 \mathbb{I}(r \le P_{b,s,h}) \, dr = \int_0^{P_{b,s,h}} 1 \, dr = P_{b,s,h}.
\end{align}
Hence $\mathbb{E}_{\mathbf{u}}[E_{b,s,h}] = \Delta(\mathbb{E}_{\mathbf{u}}[I_{b,s,h}] - P_{b,s,h}) = 0$ and $\mathbb{E}_{\mathbf{u}}[\widetilde{G}_{b,s,h}] = G_{b,s,h}$ for every $(b, s, h) \in \mathcal{I}$.

\textit{Step 2: linear contractions.} Let $T: \mathbb{R}^{B \times S \times H} \to \mathbb{R}^{d_1 \times d_2}$ be a deterministic linear operator with coefficients $\mathcal{K}_{i,j,b,s,h}$. The expectation of a finite linear combination is the linear combination of the expectations, whether or not the terms are correlated, so
\begin{align}
\mathbb{E}_{\mathbf{u}}\left[T(\widetilde{\mathbf{G}})_{i,j}\right] &= \mathbb{E}_{\mathbf{u}}\left[\sum_{b=1}^B \sum_{s=1}^S \sum_{h=1}^H \mathcal{K}_{i,j,b,s,h} \left(G_{b,s,h} + E_{b,s,h}\right)\right] \notag \\
&= \sum_{b=1}^B \sum_{s=1}^S \sum_{h=1}^H \mathcal{K}_{i,j,b,s,h} \left(G_{b,s,h} + \mathbb{E}_{\mathbf{u}}[E_{b,s,h}]\right) \notag \\
&= T(\mathbf{G})_{i,j}.
\end{align}

\textit{Step 3: the backward products.} The activation-gradient product $(\widetilde{\mathbf{G}}\mathbf{W}^\top)_{b,s,i} = \sum_{h=1}^{H_{\text{out}}} \widetilde{G}_{b,s,h} W_{i,h}$ and the weight-gradient product $(\mathbf{X}^\top\widetilde{\mathbf{G}})_{i,h} = \sum_{b=1}^B \sum_{s=1}^S X_{b,s,i} \widetilde{G}_{b,s,h}$ are of this form. One shared $\widetilde{\mathbf{G}} = \text{SR}(\mathbf{G}; \Phi(\mathbf{u}))$ therefore gives $\mathbb{E}_{\mathbf{u}}[\widetilde{\mathbf{G}}\mathbf{W}^\top] = \mathbf{G}\mathbf{W}^\top$ and $\mathbb{E}_{\mathbf{u}}[\mathbf{X}^\top\widetilde{\mathbf{G}}] = \mathbf{X}^\top\mathbf{G}$, and dividing the latter by $BS$ gives $\mathbb{E}_{\mathbf{u}}[\bar{\mathbf{g}}_{\mathbf{W}}] = \nabla_{\mathbf{W}}\mathcal{L}$. In a sparse MoE layer, the token-to-expert routing $M_{b,s,e} \in \{0, 1\}$ is fixed by the forward pass, and the expert gradient $T_{\text{MoE}, e}(\widetilde{\mathbf{G}})_{i,h} = \sum_{b=1}^B \sum_{s=1}^S M_{b,s,e} X_{b,s,i} \widetilde{G}_{b,s,h}$ is again linear in $\widetilde{\mathbf{G}}$ and hence unbiased.
\end{proof}

\subsection{Proposition~\ref{prop:subspace_decorrelation}: Effect of Sharing One Quantized Gradient}
\label{app:subspace_decorrelation}

S3R feeds one rounded gradient $\widetilde{\mathbf{G}} = \mathbf{G} + \mathbf{E}$ to both $\widehat{\nabla}_{\mathbf{X}} = \widetilde{\mathbf{G}}\mathbf{W}^\top$ and $\widehat{\nabla}_{\mathbf{W}} = \mathbf{X}^\top \widetilde{\mathbf{G}}$. The proposition below shows that the two products act on different dimensions of $\mathbf{E}$ and that sharing correlates their errors, but that this correlation enters none of the second moments used in our analysis. Claims~1--3 hold for fixed residuals.

\begin{proposition}[Row/Column Action and Effect of Sharing]
\label{prop:subspace_decorrelation}
Consider one example with error $\mathbf{E}_b \in \mathbb{R}^{S \times H_{\text{out}}}$, input activation $\mathbf{X}_b \in \mathbb{R}^{S \times H_{\text{in}}}$, and weight $\mathbf{W} \in \mathbb{R}^{H_{\text{in}} \times H_{\text{out}}}$, with reduced singular value decompositions $\mathbf{W} = \mathbf{U}_{\mathbf{W}}\mathbf{\Sigma}_{\mathbf{W}}\mathbf{V}_{\mathbf{W}}^\top$ ($\mathbf{U}_{\mathbf{W}} \in \mathbb{R}^{H_{\text{in}} \times r_W}, \mathbf{V}_{\mathbf{W}} \in \mathbb{R}^{H_{\text{out}} \times r_W}$) and $\mathbf{X}_b = \mathbf{U}_{\mathbf{X}}\mathbf{\Sigma}_{\mathbf{X}}\mathbf{V}_{\mathbf{X}}^\top$ ($\mathbf{U}_{\mathbf{X}} \in \mathbb{R}^{S \times r_X}, \mathbf{V}_{\mathbf{X}} \in \mathbb{R}^{H_{\text{in}} \times r_X}$). Let $\mathbf{\Delta}_{\mathbf{X}, b} = \mathbf{E}_b \mathbf{W}^\top \in \mathbb{R}^{S \times H_{\text{in}}}$ and $\mathbf{\Delta}_{\mathbf{W}, b} = \mathbf{X}_b^\top \mathbf{E}_b \in \mathbb{R}^{H_{\text{in}} \times H_{\text{out}}}$ be the resulting errors of the two backward products. Then:
\begin{enumerate}
    \item \textbf{Row vs.\ Column Action:} $\mathbf{\Delta}_{\mathbf{X}, b} = (\mathbf{E}_b \mathbf{V}_{\mathbf{W}})\mathbf{\Sigma}_{\mathbf{W}}\mathbf{U}_{\mathbf{W}}^\top$ acts on each \emph{row} of $\mathbf{E}_b$ (one token across channels) through the row space $\text{span}(\mathbf{V}_{\mathbf{W}}) \subseteq \mathbb{R}^{H_{\text{out}}}$ of $\mathbf{W}$, whereas $\mathbf{\Delta}_{\mathbf{W}, b} = \mathbf{V}_{\mathbf{X}}\mathbf{\Sigma}_{\mathbf{X}}(\mathbf{U}_{\mathbf{X}}^\top \mathbf{E}_b)$ acts on each \emph{column} of $\mathbf{E}_b$ (one channel across tokens) through the column space $\text{span}(\mathbf{U}_{\mathbf{X}}) \subseteq \mathbb{R}^S$ of $\mathbf{X}_b$.
    \item \textbf{Conditional Cross-Covariance:} Given $(\mathbf{W}, \mathbf{X}_b)$,
    \begin{align}
    &\mathbb{E}_{\mathbf{u}}\left[\mathbf{\Delta}_{\mathbf{X}, b, s, i} \, \mathbf{\Delta}_{\mathbf{W}, b, i', j}\right] \notag \\
    &\quad = \sum_{k=1}^{H_{\text{out}}} \sum_{s'=1}^{S} W_{i, k} X_{b, s', i'} \, \text{Cov}\left(E_{b, s, k}, E_{b, s', j}\right),
    \end{align}
    which is nonzero in general, even for independent thresholds. Sharing $\widetilde{\mathbf{G}}$ therefore correlates the activation- and weight-gradient errors, whereas separate thresholds for the two products would not.
    \item \textbf{Unchanged Second Moments:} Sharing does not change the marginal distribution of any entry of $\mathbf{E}$. Every entry of $\widehat{\nabla}_{\mathbf{X}}$ and of $\widehat{\nabla}_{\mathbf{W}}$ therefore has the same mean and variance as with separate thresholds of the same topology, and so does the expected squared norm of the full gradient. Covariances between coordinates, whether between $\widehat{\nabla}_{\mathbf{X}}$ and $\widehat{\nabla}_{\mathbf{W}}$ or across layers, do not enter the coordinatewise second moment $v_t$ of AdamW.
    \item \textbf{Inter-Layer Noise Adds in Variance:} If layer $\ell-1$ draws its thresholds independently of the noise arriving from layer $\ell$ (e.g., $\mathbf{u}^{(\ell)}_t = \text{FoldIn}(\text{key}_t, \ell)$), then $\mathbb{E}[\mathbf{E}_b^{(\ell-1)} \mid \mathbf{\Delta}_{\mathbf{X}, b}^{(\ell)}] = \mathbf{0}$. Propagated and local rounding noise are then uncorrelated, and their variances add. They are not independent, because $\mathbf{E}^{(\ell-1)}$ depends on $\mathbf{\Delta}_{\mathbf{X}}^{(\ell)}$ through the residuals and the scale.
\end{enumerate}
\end{proposition}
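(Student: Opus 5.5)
The plan is to treat the four claims in order. Each is a short computation with the SVDs or with the bilinearity of covariance, plus one tower-property step for Claim~4.

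\textbf{Claim 1.} Substitute the reduced SVDs into the two error products. The identity $\mathbf{E}_b\mathbf{W}^\top = \mathbf{E}_b\mathbf{V}_{\mathbf{W}}\mathbf{\Sigma}_{\mathbf{W}}\mathbf{U}_{\mathbf{W}}^\top$ shows that row $s$ of $\mathbf{\Delta}_{\mathbf{X},b}$ depends only on row $s$ of $\mathbf{E}_b$. It does so through the projection of that row onto $\text{span}(\mathbf{V}_{\mathbf{W}})$, since the component orthogonal to this span is annihilated. Symmetrically, $\mathbf{X}_b^\top\mathbf{E}_b = \mathbf{V}_{\mathbf{X}}\mathbf{\Sigma}_{\mathbf{X}}\mathbf{U}_{\mathbf{X}}^\top\mathbf{E}_b$, so each column of $\mathbf{E}_b$ enters only through its projection onto $\text{span}(\mathbf{U}_{\mathbf{X}})$.

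\textbf{Claim 2.} Write out the entries, $\mathbf{\Delta}_{\mathbf{X},b,s,i} = \sum_k W_{i,k}E_{b,s,k}$ and $\mathbf{\Delta}_{\mathbf{W},b,i',j} = \sum_{s'} X_{b,s',i'}E_{b,s',j}$. Multiply the two sums and take the expectation, treating $\mathbf{W}$ and $\mathbf{X}_b$ as fixed. Because every $E$ has mean zero by Lemma~\ref{lemma:unbiasedness}, each product expectation equals the corresponding covariance, and the stated double sum follows. To show the expression is nonzero in general, take independent thresholds. The diagonal terms $(s',k) = (s,j)$ then survive with $\text{Var}(E_{b,s,j}) = \Delta^2 P(1-P) > 0$, which leaves $W_{i,j}X_{b,s,i'}\text{Var}(E_{b,s,j})$, nonzero for generic data. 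With separate thresholds for the two products, $\mathbf{E}^{\text{LHS}}$ and $\mathbf{E}^{\text{RHS}}$ are independent and mean zero, so the cross term vanishes.

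\textbf{Claim 3.} Observe that setting $\mathbf{R}^{\text{LHS}} = \mathbf{R}^{\text{RHS}}$ leaves the marginal law of each threshold field unchanged. Hence the joint law of the entries of $\mathbf{E}$ feeding each product separately is the same as under separate thresholds of the same topology. Each entry of $\widehat{\nabla}_{\mathbf{X}}$ is a function of $\mathbf{E}$ alone, and likewise each entry of $\widehat{\nabla}_{\mathbf{W}}$, so means and variances coincide. The expected squared norm is a sum of coordinatewise second moments, so it matches as well. Since $v_t$ is a coordinatewise average of squares, cross-coordinate covariances never enter it.

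\textbf{Claim 4.} Condition on the upstream noise $\mathbf{\Delta}^{(\ell)}_{\mathbf{X},b}$, which fixes the residuals and the scale of layer $\ell-1$. The thresholds of layer $\ell-1$ are independent of this noise and remain marginally uniform. Lemma~\ref{lemma:unbiasedness}, Step~1, applied conditionally then gives $\mathbb{E}[\mathbf{E}^{(\ell-1)}_b \mid \mathbf{\Delta}^{(\ell)}_{\mathbf{X},b}] = \mathbf{0}$. The tower property gives $\mathbb{E}[\mathbf{\Delta}^{(\ell)}\mathbf{E}^{(\ell-1)}] = \mathbb{E}[\mathbf{\Delta}^{(\ell)}\,\mathbb{E}[\mathbf{E}^{(\ell-1)} \mid \mathbf{\Delta}^{(\ell)}]] = 0$. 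Any linear image of the sum therefore has variance equal to the sum of the two variances. For non-independence, note that the conditional variance $\Delta^2P(1-P)$ depends on the residuals, which depend on $\mathbf{\Delta}^{(\ell)}$.

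The main care point is Claim~4. The measurability setup must be stated so that the scale $\Delta$ of layer $\ell-1$, which is a function of the incoming noise, is conditioned on rather than treated as fixed. Once that is done, the remaining steps are routine.
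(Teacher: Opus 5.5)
Your proposal is correct and follows the paper's proof essentially step for step. You use SVD substitution for Claim~1, expand the bilinear product and keep only the diagonal term under independent thresholds for Claim~2, invoke invariance of the joint law of $\mathbf{E}$ for Claim~3, and combine conditional unbiasedness with the tower property for Claim~4; in addition, you spell out why the cross term vanishes with separate keys and why the conditioning must cover the scale of layer $\ell-1$, both of which the paper only asserts.
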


\begin{proof}
Each claim follows from writing the two errors as linear functions of $\mathbf{E}_b$.

For Claim~1, substituting $\mathbf{W}^\top = \mathbf{V}_{\mathbf{W}}\mathbf{\Sigma}_{\mathbf{W}}\mathbf{U}_{\mathbf{W}}^\top$ into $\mathbf{\Delta}_{\mathbf{X}, b} = \mathbf{E}_b \mathbf{W}^\top$ gives $\mathbf{\Delta}_{\mathbf{X}, b} = (\mathbf{E}_b \mathbf{V}_{\mathbf{W}})\mathbf{\Sigma}_{\mathbf{W}}\mathbf{U}_{\mathbf{W}}^\top$, which depends on $\mathbf{E}_b$ only through right multiplication by the orthonormal columns of $\mathbf{V}_{\mathbf{W}}$. Likewise, substituting $\mathbf{X}_b^\top = \mathbf{V}_{\mathbf{X}}\mathbf{\Sigma}_{\mathbf{X}}\mathbf{U}_{\mathbf{X}}^\top$ into $\mathbf{\Delta}_{\mathbf{W}, b} = \mathbf{X}_b^\top \mathbf{E}_b$ gives $\mathbf{\Delta}_{\mathbf{W}, b} = \mathbf{V}_{\mathbf{X}}\mathbf{\Sigma}_{\mathbf{X}}(\mathbf{U}_{\mathbf{X}}^\top \mathbf{E}_b)$, which depends on $\mathbf{E}_b$ only through left multiplication by $\mathbf{U}_{\mathbf{X}}^\top$.

For Claim~2, the $(s, i)$ entry of $\mathbf{\Delta}_{\mathbf{X}, b}$ times the $(i', j)$ entry of $\mathbf{\Delta}_{\mathbf{W}, b}$ is
\begin{align}
\mathbf{\Delta}_{\mathbf{X}, b, s, i} \, \mathbf{\Delta}_{\mathbf{W}, b, i', j} &= \Big(\sum_{k} E_{b, s, k} W_{i, k}\Big) \Big(\sum_{s'} X_{b, s', i'} E_{b, s', j}\Big) \notag \\
&= \sum_{k=1}^{H_{\text{out}}} \sum_{s'=1}^{S} W_{i, k} X_{b, s', i'} E_{b, s, k} E_{b, s', j}.
\end{align}
Holding $(\mathbf{W}, \mathbf{X}_b)$ fixed and taking the expectation over the thresholds, with $\mathbb{E}[\mathbf{E}] = \mathbf{0}$, gives
\begin{align}
&\mathbb{E}_{\mathbf{u}}\left[\mathbf{\Delta}_{\mathbf{X}, b, s, i} \, \mathbf{\Delta}_{\mathbf{W}, b, i', j} \;\middle|\; \mathbf{W}, \mathbf{X}_b\right] \notag \\
&\quad = \sum_{k=1}^{H_{\text{out}}} \sum_{s'=1}^{S} W_{i, k} X_{b, s', i'} \text{Cov}_{\mathbf{u}}\left(E_{b, s, k}, E_{b, s', j}\right).
\end{align}
With independent thresholds, only the term $k = j$, $s' = s$ survives, and the cross-covariance equals $W_{i,j} X_{b,s,i'} \text{Var}(E_{b,s,j})$. Since $\text{Var}(E_{b,s,j}) = \Delta^2 P_{b,s,j}(1 - P_{b,s,j}) > 0$ for $P_{b,s,j} \in (0, 1)$, it is nonzero whenever $W_{i,j} X_{b,s,i'} \neq 0$.

For Claim~3, each entry of either estimator is a fixed linear combination of entries of $\mathbf{E}$, so its mean and variance depend only on the joint law of $\mathbf{E}$, which sharing leaves unchanged. Covariances between entries of different estimators never enter a sum of squared entries.

For Claim~4, the upstream gradient of layer $\ell-1$ before quantization is $\mathbf{G}_b^{(\ell-1)} = (\mathbf{G}^{(\ell)}_b\mathbf{W}^{(\ell)\top} + \mathbf{\Delta}_{\mathbf{X}, b}^{(\ell)}) \odot \sigma'(\mathbf{Z}_b^{(\ell-1)})$. Given $\mathbf{\Delta}_{\mathbf{X}, b}^{(\ell)}$, the residuals and the scale of layer $\ell-1$ are fixed, and its thresholds are still marginally uniform. Lemma~\ref{lemma:unbiasedness} then gives $\mathbb{E}[\mathbf{E}_b^{(\ell-1)} \mid \mathbf{\Delta}_{\mathbf{X}, b}^{(\ell)}] = \mathbf{0}$, and by the tower property the products of errors from different layers have zero mean.
\end{proof}

\subsection{Proofs of Lemma~\ref{lemma:shared_offset}, Propositions~\ref{prop:coherent_dx} and~\ref{prop:two_way}, Theorem~\ref{thm:centering}, and Corollary~\ref{cor:structured_topologies}}
\label{app:two_way_proof}

\begin{proof}[Proof of Lemma~\ref{lemma:shared_offset}]
Write $G_i/\Delta = \lfloor G_i/\Delta\rfloor + P_i$. The kernel rounds element $i$ up exactly when $P_i + R_i \ge 1$, so $E_i = \Delta(I_i - P_i)$ with $I_i = \mathbb{I}(R_i \ge 1 - P_i)$, and $\Pr(I_i = 1) = P_i$ for a uniform $R_i$. If $R_i = R_k = r$, both round up exactly when $r \ge 1 - \min(P_i, P_k)$, so $\mathbb{E}[I_iI_k] = \min(P_i, P_k)$ and $\text{Cov}(E_i, E_k) = \Delta^2(\min(P_i, P_k) - P_iP_k)$. With $a = \min(P_i, P_k)$ and $b = \max(P_i, P_k)$, this equals $\Delta^2 a(1 - b) \in [0, \Delta^2 a(1 - a)] \subseteq [0, \Delta^2/4]$; it is zero when $a = 0$ and tends to zero as $b \to 1$. If $R_i$ and $R_k$ are independent, so are $E_i$ and $E_k$. For residuals spread like a uniform sample, the average variance and the average pairwise covariance approach $\Delta^2\int_0^1 p(1-p)\,dp = \Delta^2/6$ and $\Delta^2\int_0^1\!\int_0^1(\min(p, q) - pq)\,dp\,dq = \Delta^2(\frac{1}{3} - \frac{1}{4}) = \Delta^2/12$.
\end{proof}

\begin{proof}[Proof of Proposition~\ref{prop:coherent_dx}]
For the first statement, entry $(i, j)$ of the weight-gradient error is $\frac{1}{BS}\sum_{b,s} X_{b,s,i}E_{b,s,j}$. Under $\Phi_{\text{2D}}$, errors of different tokens use independent thresholds and are therefore independent, so its variance is $\frac{1}{(BS)^2}\sum_{b,s} X_{b,s,i}^2\,\text{Var}(E_{b,s,j})$. Each $\text{Var}(E_{b,s,j}) = \Delta^2P_{b,s,j}(1 - P_{b,s,j})$ depends only on the marginal law of its threshold, so independent SR gives the same sum. For the second statement, the variance of $\sum_j W_{ij}E_{b,s,j}$ is $\sum_j W_{ij}^2\,\text{Var}(E_{b,s,j}) + \sum_{j \neq k} W_{ij}W_{ik}\,\text{Cov}(E_{b,s,j}, E_{b,s,k})$; independent SR has the same first sum and a zero second sum. With equal covariances $c$, the second sum is $c(\sum_j W_{ij})^2 - c\sum_j W_{ij}^2$.

For the third statement, a correction of the weight gradient changes $\bar{\mathbf{g}}_{\mathbf{W}}$ but not $\widehat{\nabla}_{\mathbf{X}}$. Let $\mathcal{F}$ collect the forward pass and all randomness of the layers above. Let $\mathbf{A}$ be the contribution of the arriving activation-gradient error to a linear function of the lower layer's gradient, for example its weight gradient, let $\mathbf{Z}$ be the contribution of that layer's own rounding error, and let $\mathbf{C}$ be a correction linear in that error. The lower layer draws its thresholds independently of $\mathcal{F}$, so Lemma~\ref{lemma:unbiasedness}, applied given $\mathcal{F}$, yields $\mathbb{E}[\mathbf{Z} \mid \mathcal{F}] = \mathbb{E}[\mathbf{C} \mid \mathcal{F}] = \mathbf{0}$ (see also Proposition~\ref{prop:subspace_decorrelation}, Claim~4), whereas $\mathbf{A}$ is determined by $\mathcal{F}$. Hence $\mathbb{E}\langle\mathbf{A}, \mathbf{Z} - \mathbf{C}\rangle = 0$ and $\mathbb{E}\|\mathbf{A} + \mathbf{Z} - \mathbf{C}\|_F^2 = \mathbb{E}\|\mathbf{A}\|_F^2 + \mathbb{E}\|\mathbf{Z} - \mathbf{C}\|_F^2 \ge \mathbb{E}\|\mathbf{A}\|_F^2$. A correction built from the upper layer's error, such as~(\ref{eq:s3r_opt_dx}), is outside this class: it carries that error downward.
\end{proof}

\begin{proof}[Proof of Theorem~\ref{thm:centering}]
Let $C_{ik} = \text{Cov}(X_i, X_k)$ and $a_i = \text{Cov}(X_i, \bar{X}) = \frac{1}{n}\sum_k C_{ik}$. Then $\frac{1}{n}\sum_i a_i = \text{Var}(\bar{X}) = \frac{1}{n^2}\sum_{i,k} C_{ik} = \frac{1}{n}\bar{\sigma}^2 + \frac{n-1}{n}\bar{c}$. By bilinearity, $\text{Var}(Y_i) = \text{Var}(X_i) - 2a_i + \text{Var}(\bar{X})$; averaging over $i$ gives the second statement, and since the left side is nonnegative, $\bar{c} \le \bar{\sigma}^2$. For the first statement, $\sum_i Y_i = 0$ on every realization, so $0 = \text{Var}(\sum_i Y_i) = \sum_i \text{Var}(Y_i) + \sum_{i \neq k}\text{Cov}(Y_i, Y_k)$. The average pairwise covariance of the $Y_i$ is therefore $-\frac{1}{n-1}$ times their average variance, which by the second statement equals $-\frac{1}{n}(\bar{\sigma}^2 - \bar{c}) = \bar{c} - \text{Var}(\bar{X})$. For the third statement, $\text{Cov}(Y_i, Y_k) = C_{ik} - a_i - a_k + \text{Var}(\bar{X})$, and equal covariances give $a_i = \text{Var}(\bar{X})$ for every $i$, so $\text{Cov}(Y_i, Y_k) = c - \text{Var}(\bar{X}) = -\frac{1}{n}(\sigma^2 - c)$. Moreover, $\sum_i w_iY_i = \sum_i w_iX_i - n\bar{w}\bar{X} = \sum_i (w_i - \bar{w})X_i$. The covariance matrix of the $X_i$ is $(\sigma^2 - c)\mathbf{I} + c\mathbf{1}\mathbf{1}^\top$, and the weights $w_i - \bar{w}$ sum to zero, so the variance is $(\sigma^2 - c)\sum_i(w_i - \bar{w})^2$.

The criterion for a single element follows from $\text{Var}(Y_i) = \text{Var}(X_i) - 2a_i + \text{Var}(\bar{X})$. In the example of the main text, $\bar{X} = Z/3$, $Y_1 = -\frac{4}{3}Z$, and $Y_2 = Y_3 = \frac{2}{3}Z$, so $\text{Var}(Y_1) = \frac{16}{9}\text{Var}(Z)$, and the average variance is $\frac{8}{9}\text{Var}(Z) = (1 - \frac{1}{9})\text{Var}(Z)$, as the second statement requires with $\text{Var}(\bar{X}) = \frac{1}{9}\text{Var}(Z)$.
\end{proof}

\begin{proof}[Proof of Corollary~\ref{cor:structured_topologies}]
Under $\Phi_{\text{2D}}$, the channels of token $(b, s)$ share $u_{b,s}$, so the first statement is the first statement of Theorem~\ref{thm:centering} with $n = D$, and $\bar{c}_{b,s} \ge 0$ by Lemma~\ref{lemma:shared_offset}. Entries of channel $j$ in different tokens use independent thresholds, so their covariances vanish by the same lemma, and the second statement of Theorem~\ref{thm:centering} with $n = S$ and $\bar{c} = 0$ gives a reduction of $\frac{1}{S}\bar{\sigma}^2_{b,j}$.
\end{proof}

\begin{proof}[Proof of Proposition~\ref{prop:two_way}]
All four claims follow from two facts: $\mathbf{E}^{\text{str}}_b$ is linear in $\mathbf{E}_b$, and the remaining error $\mathbf{N}_b$ has zero row and column sums.

For Claim~1, both subtracted terms are linear in $\mathbf{E}$, with coefficients ($\mathbf{W}$, $\mathbf{X}_b$, and fixed averaging vectors) determined by the forward pass, and $\mathbb{E}[\mathbf{E}] = \mathbf{0}$ by Lemma~\ref{lemma:unbiasedness}. The corrected estimators therefore keep the means $\mathbf{G}\mathbf{W}^\top$ and $\nabla_{\mathbf{W}}\mathcal{L}$.

For Claim~2, $\mathbf{E}^{\text{str}}_b \in \mathcal{S}$ by construction. Using $\mathbf{1}_S^\top\boldsymbol{\varepsilon}_b = Sm_b$ and $\bar{\mathbf{e}}_b^\top\mathbf{1}_D = Dm_b$, we get $\mathbf{N}_b^\top\mathbf{1}_S = S\bar{\mathbf{e}}_b - S\bar{\mathbf{e}}_b - \mathbf{1}_D(Sm_b - Sm_b) = \mathbf{0}$ and $\mathbf{N}_b\mathbf{1}_D = D\boldsymbol{\varepsilon}_b - Dm_b\mathbf{1}_S - D(\boldsymbol{\varepsilon}_b - m_b\mathbf{1}_S) = \mathbf{0}$. Hence $\langle\mathbf{N}_b, \mathbf{1}_S\mathbf{a}^\top + \mathbf{c}\mathbf{1}_D^\top\rangle = \mathbf{a}^\top\mathbf{N}_b^\top\mathbf{1}_S + \mathbf{c}^\top\mathbf{N}_b\mathbf{1}_D = 0$. So $\mathbf{N}_b \perp \mathcal{S}$, $\mathbf{E}^{\text{str}}_b$ is the orthogonal projection, and $\|\mathbf{E}_b\|_F^2 = \|\mathbf{E}^{\text{str}}_b\|_F^2 + \|\mathbf{N}_b\|_F^2$.

For Claim~3, $\mathbf{N}_b\mathbf{1}_D = \mathbf{0}$ states that the $D$ remaining errors of every token sum to zero, and $\mathbf{N}_b^\top\mathbf{1}_S = \mathbf{0}$ that the $S$ remaining errors of every channel do. For such a group, the argument in the proof of Theorem~\ref{thm:centering} applies verbatim: the sum has zero variance, so the average pairwise covariance is $-\frac{1}{n-1}$ times the average variance. Moreover, $\mathbf{E}^{\text{str}}_b\mathbf{W}^\top = \mathbf{1}_S(\mathbf{W}\bar{\mathbf{e}}_b)^\top + (\boldsymbol{\varepsilon}_b - m_b\mathbf{1}_S)(\mathbf{W}\mathbf{1}_D)^\top$, so subtracting it removes from $\mathbf{E}_b\mathbf{W}^\top$ the term $\boldsymbol{\varepsilon}_b(\mathbf{W}\mathbf{1}_D)^\top$, whose row $s$ is the shared term $\varepsilon_{b,s}\mathbf{W}\mathbf{1}_D$ of token $s$.

For Claim~4, Claim~2 gives $\langle\mathbf{N}_b, \mathbf{E}^{\text{str}}_b\rangle = 0$ on every realization, so $\|\mathbf{E}_b\|_F^2 = \|\mathbf{N}_b\|_F^2 + \|\mathbf{E}^{\text{str}}_b\|_F^2$. Every entry of $\mathbf{E}_b$, $\mathbf{N}_b$, and $\mathbf{E}^{\text{str}}_b$ has zero mean by Lemma~\ref{lemma:unbiasedness} and linearity, so taking expectations gives the identity.
\end{proof}

\subsection{Numerical Validation}
\label{app:numerical_validation}

We checked Lemma~\ref{lemma:shared_offset} and Theorem~\ref{thm:centering} with exact second moments for one token with $D = 256$ channels, $\Delta = 1$, and one shared threshold, rounded as in the kernel. For fixed residuals, Lemma~\ref{lemma:shared_offset} gives the covariance matrix of the errors in closed form, so every entry of Table~\ref{tab:centering_check} is exact. A Monte Carlo run with 20{,}000 thresholds reproduces the variance of one output before and after centering, 62.7 against 62.6 exact and 21.2 against 21.2. The residuals are a fixed uniform sample; a sample of $U^4$ with $U$ uniform, which places most of them near 0, as for small gradient entries; or all equal to $\frac{1}{2}$. The 128 activation-gradient outputs use weight rows with i.i.d.\ $\mathcal{N}(0, 1)$ or $\mathcal{N}(0.05, 1)$ entries; the latter adds a common component, which enlarges $\mathbf{W}\mathbf{1}_D$. Centering subtracts the token mean, as S3R-EC does for one token.

\begin{table*}[t]
\centering
\caption{Exact Second Moments of Per-Token Centering for One Token ($D = 256$, $\Delta = 1$, One Shared Threshold, 128 Activation-Gradient Outputs). $\bar{\sigma}^2$: Average Variance of the Entries. $\bar{c}$: Their Average Pairwise Covariance before and after Centering. The Last Two Columns Divide the Summed Output Variance after Centering by That before It and by That of Independent SR}
\label{tab:centering_check}
\resizebox{0.9\textwidth}{!}{%
\begin{tabular}{@{} l l c c c c c c c c @{}}
\toprule
Residuals & $\mathbf{W}$ entries & $\bar{\sigma}^2$ & $\bar{c}$ before & $\bar{c}$ after & Entry var.\ after & Noisier entries & Noisier outputs & vs.\ before & vs.\ indep.\ SR \\
\midrule
Uniform & $\mathcal{N}(0, 1)$ & 0.167 & 0.083 & $-3.3{\times}10^{-4}$ & 0.083 & 46/256 & 22/128 & 0.468 & 0.541 \\
Uniform & $\mathcal{N}(0.05, 1)$ & 0.167 & 0.083 & $-3.3{\times}10^{-4}$ & 0.083 & 46/256 & 25/128 & 0.371 & 0.514 \\
$U^4$ & $\mathcal{N}(0, 1)$ & 0.093 & 0.028 & $-2.5{\times}10^{-4}$ & 0.064 & 109/256 & 34/128 & 0.600 & 0.698 \\
$U^4$ & $\mathcal{N}(0.05, 1)$ & 0.093 & 0.028 & $-2.5{\times}10^{-4}$ & 0.064 & 109/256 & 34/128 & 0.602 & 0.774 \\
All $\frac{1}{2}$ & $\mathcal{N}(0, 1)$ & 0.250 & 0.250 & 0 & 0.000 & 0/256 & 0/128 & 0.000 & 0.000 \\
All $\frac{1}{2}$ & $\mathcal{N}(0.05, 1)$ & 0.250 & 0.250 & 0 & 0.000 & 0/256 & 0/128 & 0.000 & 0.000 \\
\bottomrule
\end{tabular}%
}
\vspace{-6pt}
\end{table*}

In every case, centering turns the average pairwise covariance of the entries from 0.028--0.250 into a value between $-3.3 \times 10^{-4}$ and 0, and Theorem~\ref{thm:centering} holds to within $10^{-16}$. With all residuals equal, every element rounds in the same direction, and centering removes the error completely. Otherwise, the covariances differ between elements, and 18--43\% of the entries and 17--27\% of the outputs become noisier, the worst output by $3.4\times$. The summed output variance nevertheless falls to 0.37--0.60 of its uncentered value and to 0.51--0.77 of that of independent SR, which has the same per-element variances but no covariances.

\section{Algorithm Specifications and XLA Lowering on TPU v5p}
\label{app:algorithms_xla}
\begin{table*}[t!]
\centering
\caption{Algorithm 1: S3R Backward Pass of One Quantized Linear Layer with the Two-Way S3R-EC Correction (Plain JAX Compiled by XLA)}
\label{tab:algorithms}
\small
\begin{tabular}{@{} r @{\hspace{6pt}} p{0.92\textwidth} @{}}
\toprule
\multicolumn{2}{@{}l@{}}{\textbf{Algorithm 1:} S3R backward quantization and INT8 GEMMs with the two-way S3R-EC correction} \\
\midrule
\multicolumn{2}{@{}p{0.96\textwidth}@{}}{\textbf{Inputs:} upstream gradient $\mathbf{G}_t \in \text{BF16}^{[B,S,D_{\text{out}}]}$; forward operands $\widehat{\mathbf{X}}_t \in \text{S8}^{[B,S,H_{\text{in}}]}$ with per-token scales $\mathbf{s}_x \in \text{FP32}^{[B,S]}$ and $\widehat{\mathbf{W}}_t \in \text{S8}^{[H_{\text{in}}, D_{\text{out}}]}$ with per-output-channel scales $\mathbf{s}_w \in \text{FP32}^{[D_{\text{out}}]}$; PRNG key $\kappa_t$; broadcast $\Phi \in \{\Phi_{\text{2D}}\ \text{(default)}, \Phi_{\text{bcast}}\}$.} \\
\multicolumn{2}{@{}p{0.96\textwidth}@{}}{\textbf{Outputs:} corrected activation gradient $\widehat{\nabla}_{\mathbf{X},t} \in \text{BF16}^{[B,S,H_{\text{in}}]}$; corrected token-mean weight gradient $\bar{\mathbf{g}}_{\mathbf{W},t} \in \text{FP32}^{[H_{\text{in}}, D_{\text{out}}]}$ for the unchanged AdamW update. Divisions by $\mathbf{s}_x$ and $\mathbf{s}_w$ rescale rows and columns.} \\
\midrule
1: & \textbf{// Step 1: compact seed, four thresholds per random word} \\
2: & $N_{\text{u32}} \leftarrow \lceil BS/4 \rceil$ for $\Phi_{\text{2D}}$ or $\lceil B/4 \rceil$ for $\Phi_{\text{bcast}}$; $\quad \mathbf{z} \leftarrow \texttt{Threefry2x32}(\kappa_t, \text{shape}=[N_{\text{u32}}])$ \\
3: & \textbf{for} $m \in \{0, 1, 2, 3\}$ \textbf{do} \\
4: & $\quad \mathbf{b}_m \leftarrow (\mathbf{z} \gg 8m) \;\texttt{\&}\; \texttt{0xFF}$ \hfill {\scriptsize $\triangleright$ four 8-bit slices per word} \\
5: & $\quad \mathbf{u}_m \leftarrow \texttt{Bitcast\_BF16}(\texttt{0x3F80} \;\texttt{|}\; \mathbf{b}_m) - 1.0_{\text{bf16}} + 2^{-8}$ \hfill {\scriptsize $\triangleright$ grid $\{(2k+1)/256\}_{k=0}^{127}$} \\
6: & \textbf{end for}; $\quad \mathbf{u}_t \leftarrow \texttt{Reshape}(\{\mathbf{u}_m\}_{m=0}^{3})$ to $[B, S, 1]$ ($\Phi_{\text{2D}}$) or $[B, 1, 1]$ ($\Phi_{\text{bcast}}$) \\
7: & \textbf{// Step 2: quantize the folded gradient once ($\alpha_{\text{share}} = 1/2$)} \\
8: & $\mathbf{G}'_t \leftarrow \mathrm{diag}(\mathbf{s}_x)\,\mathbf{G}_t\,\mathrm{diag}(\mathbf{s}_w)$ (FP32); $\quad \Delta_t \leftarrow \max_{b,s,h} |G'_{t,b,s,h}| / 127$ \hfill {\scriptsize $\triangleright$ Eq.~(\ref{eq:scale_fold})} \\
9: & $\mathbf{R}_t \leftarrow \texttt{Broadcast}_{\Phi}(\mathbf{u}_t, [B, S, D_{\text{out}}])$ \hfill {\scriptsize $\triangleright$ fused into Line 10; never materialized} \\
10: & $\mathbf{S}_t \leftarrow \mathbf{G}'_t / \Delta_t$; $\quad \mathbf{Q}_t \leftarrow \texttt{Clip}\big(\lfloor \mathbf{S}_t + \mathbf{R}_t \rfloor, -127, 127\big) \in \text{S8}^{[B,S,D_{\text{out}}]}$ \hfill {\scriptsize $\triangleright$ written to HBM once, 1 byte per element} \\
11: & \textbf{// Step 3: two INT8 GEMMs read the same $\mathbf{Q}_t$} \\
12: & $\widehat{\nabla}_{\mathbf{X}, t} \leftarrow \Delta_t \cdot \texttt{DotGeneral\_S8}(\mathbf{Q}_t, \widehat{\mathbf{W}}_t^\top) / \mathbf{s}_x$ \\
13: & $\bar{\mathbf{g}}_{\mathbf{W}, t} \leftarrow \frac{\Delta_t}{B S} \cdot \texttt{DotGeneral\_S8}(\widehat{\mathbf{X}}_t^\top, \mathbf{Q}_t) / \mathbf{s}_w$ \hfill {\scriptsize $\triangleright$ token mean, FP32} \\
14: & \textbf{// Step 4: two-way S3R-EC on the folded error} \\
15: & $\mathbf{E}_b \leftarrow \Delta_t \mathbf{Q}_{t,b} - \mathbf{G}'_{t,b}$; $\; \bar{\mathbf{e}}_b, \boldsymbol{\varepsilon}_b, m_b$ as in (\ref{eq:estr}); $\; \bar{\mathbf{x}}_b \leftarrow \frac{1}{S}\widehat{\mathbf{X}}_{t,b}^\top\mathbf{1}_S$ \hfill {\scriptsize $\triangleright$ f32 reductions in the quantizer fusion} \\
16: & $\widehat{\nabla}_{\mathbf{X}_b, t} \leftarrow \widehat{\nabla}_{\mathbf{X}_b, t} - \big[\mathbf{1}_S(\widehat{\mathbf{W}}_t\bar{\mathbf{e}}_b)^\top + (\boldsymbol{\varepsilon}_b - m_b\mathbf{1}_S)(\widehat{\mathbf{W}}_t\mathbf{1}_D)^\top\big] / \mathbf{s}_x$ \hfill {\scriptsize $\triangleright$ Eq.~(\ref{eq:s3r_opt_dx})} \\
17: & $\bar{\mathbf{g}}_{\mathbf{W}, t} \leftarrow \bar{\mathbf{g}}_{\mathbf{W}, t} - \frac{1}{BS}\sum_b \big[S\bar{\mathbf{x}}_b \bar{\mathbf{e}}_b^\top + \widehat{\mathbf{X}}_{t,b}^\top(\boldsymbol{\varepsilon}_b - m_b\mathbf{1}_S)\mathbf{1}_D^\top\big] / \mathbf{s}_w$ \hfill {\scriptsize $\triangleright$ Eq.~(\ref{eq:s3r_opt_gc})} \\
18: & \textbf{return} $(\widehat{\nabla}_{\mathbf{X}, t}, \, \bar{\mathbf{g}}_{\mathbf{W}, t})$ \\
\bottomrule
\end{tabular}
\vspace{-6pt}
\end{table*}

\begin{figure*}[t]
    \centering
    \includegraphics[width=0.96\textwidth]{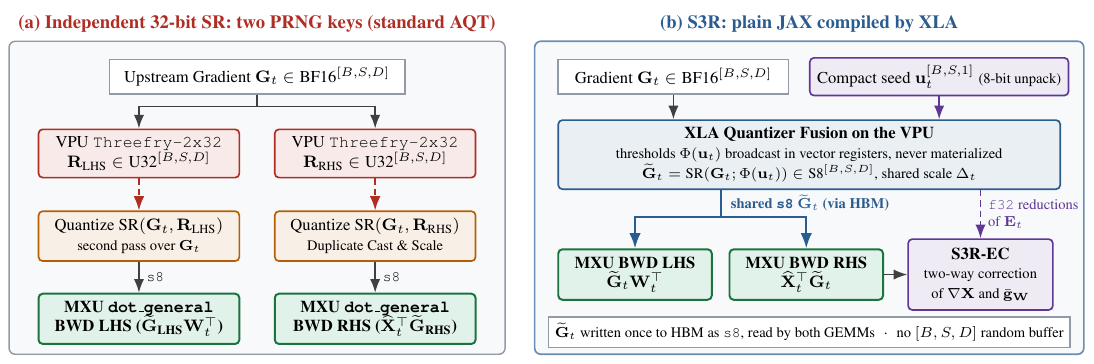}
    \vspace{-4pt}
    \caption{XLA lowering of backward quantization on TPU v5p (schematic). \textbf{(a)} Independent 32-bit SR with two PRNG keys (standard AQT): $\mathbf{G}_t$ is quantized twice, once per backward product, each time with full-size thresholds. Partitionable Threefry evaluates one hash per element inside the quantizer; non-partitionable Threefry materializes $[B, S, D]$ \texttt{u32} buffers. \textbf{(b)} S3R, written in plain JAX and compiled by XLA: the thresholds of a compact seed are broadcast in vector registers inside the XLA quantizer fusion and never materialized, and $\widetilde{\mathbf{G}}_t$, stored as S8 codes in $\text{S8}^{[B,S,D]}$, is written to HBM once and read by both backward GEMMs. S3R-EC uses small \texttt{f32} reductions of the rounding error emitted by the quantizer fusion. There is no custom kernel.}
    \label{fig:xla_hlo_fusion_comparison}
    \vspace{-4pt}
\end{figure*}

Table~\ref{tab:algorithms} specifies the S3R backward pass of one quantized linear layer in our JAX/MaxText implementation, which builds on the AQT (Accurate Quantized Training) library. It is plain JAX compiled by XLA, with no custom kernel, and the optimizer is AdamW applied to the corrected gradients. Fig.~\ref{fig:xla_hlo_fusion_comparison} shows how XLA lowers backward quantization on Google Cloud TPU v5p. This section explains that lowering and reports the vector instruction counts of the compiled step.

\subsection{Threshold Generation and In-Register Broadcast}
On TPU v5p, each TensorCore has four $128 \times 128$ systolic Matrix Multiply Units (MXUs) and one Vector Processing Unit (VPU). The VPU operates on vector registers of $8 \times 128$ 32-bit lanes and also runs the random-number generator. \texttt{Threefry-2x32} runs 20 rounds of 32-bit additions, bit rotations and XORs on pairs of \texttt{u32} words, which costs 127--134 vector ALU operations per 32-bit word in our compiled code (counted below). Our independent-SR baseline draws f32 thresholds with \texttt{jax.random.uniform}; with the default partitionable implementation, every element receives its own Threefry evaluation. The legacy non-partitionable mode packs four 8-bit values into each word, but it cannot be partitioned across devices without redundant computation or communication.

S3R follows the three steps of Algorithm~1. First, Lines 2--6 use all four bytes of each Threefry word and shift the threshold grid by half a step. Second, Threefry runs only on the seed: $\lceil BS/4 \rceil$ words for $\Phi_{\text{2D}}$, whose seed shape is $[B, S, 1]$, and $\lceil B/4 \rceil$ words for $\Phi_{\text{bcast}}$, whose seed shape is $[B, 1, 1]$. Third, XLA fuses the \texttt{broadcast\_in\_dim} of the seed into the \texttt{add} that computes $\lfloor \mathbf{S}_t + \mathbf{R}_t \rfloor$ (Lines 9--10), so the thresholds are broadcast as the operand is read into vector registers and the full tensor $\mathbf{R}_t$ is never materialized. The compiled program confirms this: the S3R step contains no PRNG fusion that generates more than $2^{20}$ elements, and its temporary memory is within 1\,MiB of RTNE with $\Phi_{\text{2D}}$ and within 37\,MiB with $\Phi_{\text{bcast}}$.

\subsection{Four Thresholds per Random Word}
\label{app:bit_unpack}
A \texttt{bf16} threshold uses only 7 random bits, yet partitionable Threefry hashes once per element. Packing several narrow draws into one word is known practice: JAX's legacy implementation does so for 8- and 16-bit draws \cite{bradbury2018jax}, and NVIDIA's stochastic-rounding conversion takes one 32-bit random operand for four FP4 values \cite{nvidia2025ptx}. SR hardware \cite{mikaitis2021stochastic} and SR with few random bits \cite{elarar2025probabilistic, fitzgibbon2025stochastic} have also been studied. S3R splits each word $\mathbf{z}$ into four slices $\mathbf{b}_m = (\mathbf{z} \gg 8m) \;\texttt{\&}\; \texttt{0xFF}$, $m \in \{0,1,2,3\}$, ORs each slice into the mantissa of $1.0$ (bit pattern \texttt{0x3F80}) and subtracts $1$:
\begin{equation}
\label{eq:ieee_bittrick}
\mathbf{u}_m = \texttt{Bitcast\_BF16}\left(\texttt{0x3F80} \;\texttt{|}\; \mathbf{b}_m\right) - 1.0_{\text{bf16}} \in [0, 1).
\end{equation}
Unpacking cuts Threefry evaluations by $4\times$, a small saving once thresholds are broadcast (Section~\ref{subsec:broadcast}).

Only the low seven bits of each slice reach the mantissa. The resulting grid $\{k/128\}_{k=0}^{127}$ would bias rounding in one direction by up to $\Delta/128$; with $\mathbf{u}_m + 2^{-8}$, an element rounds up with probability $P$ rounded to the nearest multiple of $2^{-7}$, a bias of at most $\Delta/256$ that averages out only over uniformly spread residuals. Since all thresholds lie in $[2^{-8}, 1 - 2^{-8}]$, elements with $|G| < \Delta/256$ round to zero whatever their sign, a dead zone $128\times$ narrower than under RTNE ($|G| < \Delta/2$). Lemma~\ref{lemma:unbiasedness} thus holds up to this discretization.

\subsection{Quantizing the Upstream Gradient Once}
In standard quantized backpropagation with AQT, the \texttt{dot\_general} that computes $\nabla_{\mathbf{X}}$ and the one that computes $\nabla_{\mathbf{W}}$ have separate quantization rules with distinct PRNG keys, $\kappa_{\text{LHS}} \neq \kappa_{\text{RHS}}$ (Fig.~\ref{fig:xla_hlo_fusion_comparison}a). Because the keys differ, XLA's common-subexpression elimination cannot merge the two quantizations of $\mathbf{G}_t$. The compiled step then draws two full sets of thresholds, computes the scale with two max-reductions, and runs the \texttt{bf16 $\to$ s8} rounding twice. S3R quantizes $\mathbf{G}'_t$ once with one key (Fig.~\ref{fig:xla_hlo_fusion_comparison}b; Algorithm~1, Lines 8--13). Its S8 output is written to HBM once and read by both GEMMs, which halves the quantization work ($\alpha_{\text{share}} = 1/2$). The two-way correction reuses $\Delta_t$ and the rounding error of this fusion (Lines 15--17).

\subsection{Vector Instruction Counts of the Compiled Step}
We count vector ALU (VALU) operations in the compiled $W_1$ step. The TPU compiler packs VALU operations into very-long-instruction-word (VLIW) instructions; one instruction word issues per cycle and can hold several VALU operations. We take the static count of each basic block from the final instruction words and multiply it by the block's resolved loop trip counts. One VALU operation processes one vector register of $8 \times 128$ 32-bit lanes. These are static schedule counts, not hardware counter readings. On a chain of dependent int32 operations, one HLO elementwise operation compiles to 1.078 VALU operations. We normalize step-level counts per gradient element, i.e., per element of the upstream gradients that receives a threshold ($N = BS(2D_{\text{ffn}} + H)$ per FFN step). To compare a measured extra device time with these counts, we multiply it by the attained VPU rate $\pi_{\text{VPU}} = 14.34 \times 10^{12}$ int32 elementwise operations per second, measured with a microbenchmark, and divide by $N$. The result is the number of VALU operations per gradient element that would take the same time.
\begin{enumerate}
    \item \textbf{Independent 32-bit SR:} With partitionable Threefry, the quantizer fusions of the upstream gradients issue 139.0 VALU operations per output element for $\nabla\mathbf{Z}_{\text{gate}}$ and $\nabla\mathbf{Z}_{\text{up}}$ and 142.0 for $\nabla\mathbf{Y}$, against 12.3 and 8.1 for the corresponding RTNE quantizers. The difference, about 127--134 operations, is the Threefry hash of one 32-bit word. Over the whole step, independent SR issues 127.6 more VALU operations per gradient element than RTNE.
    \item \textbf{S3R:} With $\Phi_{\text{2D}}$, the same quantizer fusions issue 14.5 and 11.0 VALU operations per element (14.5 and 11.1 with $\Phi_{\text{bcast}}$); the extra 2.2 and 2.9 operations over RTNE add the broadcast threshold. Over the whole step, S3R issues 2.4 more VALU operations per gradient element than RTNE with $\Phi_{\text{2D}}$ and 1.1 fewer with $\Phi_{\text{bcast}}$. Its only Threefry work acts on the seed. At $W_7$, this work amounts to $4.1 \times 10^{6}$ bitwise operations per step, summed over elements, with $\Phi_{\text{2D}}$ and 2460 with $\Phi_{\text{bcast}}$, against $4.49 \times 10^{10}$ for independent SR. On $\Phi_{\text{bcast}}$ we also counted a dW-only correction, which adds 8.2 VALU operations per gradient element.
    \item \textbf{S3R-EC on $\Phi_{\text{2D}}$:} RTNE, S3R and both corrections were compiled with one compiler and library revision, and the RTNE and S3R counts equal those above. The two-way correction adds 16.6 VALU operations per gradient element over S3R (18.9 over RTNE), and the dX-only correction adds 10.6 (13.0). The quantizer fusions of the upstream gradients gain small \texttt{f32} outputs for the per-token and per-channel reductions of $\mathbf{E}$, each weight quantizer gains one \texttt{f32} reduction, and the number of fusions grows from 76 to 108 (two-way) and 98 (dX-only). The added instruction words, $1.80 \times 10^{5}$ and $1.55 \times 10^{5}$ per core, take 0.095 and 0.082\,ms at the issue rate of 1.89\,GHz implied by the int32 operation chain. This is 91\% and 69\% of the measured $W_1$ increments over S3R (0.105 and 0.118\,ms, mean of two runs), so the rest of the dX-only increment is not explained by issued instructions. The RTNE and S3R compilations contain one data-dependent region whose trip count we could not resolve; it can add at most 0.31 VALU operations per gradient element, and the correction compilations have no such region. Both corrections stay far below the 127.6 VALU operations per gradient element that independent Threefry SR adds.
\end{enumerate}

\section{Experimental Setup and Hyperparameters}
\label{app:experimental_setup}

Table~\ref{tab:hyperparam_specs} summarizes the settings of the three workloads.

\textbf{Convergence runs.} We train on TFDS \texttt{c4/en} 3.0.1 with the default SentencePiece tokenizer of MaxText. Documents are packed into one stream as \texttt{[BOS]} document \texttt{[EOS]} and cut into consecutive non-overlapping windows, so every training token is seen at most once. Each seed starts at a different offset of the stream, so seeds see disjoint data, and all configurations see the same data for a given seed. Each configuration sweeps the peak learning rate within $\{0.75, 1.5, 3, 6, 12, 24\} \times 10^{-4}$. The corrected configurations cover $1.5$--$12 \times 10^{-4}$, with at least one point on each side of their optimum, and the sweep of uncorrected S3R with $\Phi_{\text{bcast}}$ extends down to $3.75 \times 10^{-5}$. The validation loss is measured on 16 fixed batches of $8 \times 2049$ tokens (262{,}272 tokens from the first 548 validation documents), with each configuration's own forward pass. The correction products use the default matrix-multiplication precision (\texttt{bf16} inputs, \texttt{f32} accumulation), here and in the timing runs.

\textbf{Metric definitions.} For the FFN block, the profiler splits the device time into categories. We report the time of the fusions that contain a GEMM (MXU-category time) and the share of busy time spent in PRNG fusions. The SR share of $v_t$ is the fraction of $\sum\hat{v}_t$ explained by the measured SR variance, late in training. For the per-layer diagnostics (Table~\ref{tab:grad_diag}), we take checkpoints of a BF16 run and compute each configuration's gradients on fixed training batches twice, with independent thresholds, to obtain $\mathbf{d}_1$ and $\mathbf{d}_2$. The stochastic error $(\mathbf{d}_1 - \mathbf{d}_2)/\sqrt{2}$ has the covariance of the SR error of one draw. We report its Frobenius norm relative to the BF16 gradient of the same batch.

\textbf{FFN-block measurements.} Table~\ref{tab:ffn_detail} lists the device times behind Table~\ref{tab:ffn_step_time} and the PRNG shares. Run-to-run variation reaches $\pm 6\%$ at $W_8$ and $W_9$ and is shared across configurations. The corrections were timed in two further runs without the rounding-variance statistic. Table~\ref{tab:ffn_corr} gives their speedups, their extra time in int32 operations per upstream-gradient element, the change in MXU-category time, and their temporary memory.

\begin{table*}[t!]
\centering
\caption{Device Time per Step and PRNG Share of One SwiGLU FFN Block (Forward + Backward, per Device) on TPU v5p, from the Run behind Table~\ref{tab:ffn_step_time}. $B = 8$ ($4$ for $W_9$), $S = 2048$; $I_{\text{BF16}}$ Is the Arithmetic Intensity of the BF16 Step. Median over Repeated Steps}
\label{tab:ffn_detail}
\resizebox{\textwidth}{!}{%
\begin{tabular}{@{} l r r r r r r r r r @{}}
\toprule
 & $W_1$ & $W_2$ & $W_3$ & $W_4$ & $W_5$ & $W_6$ & $W_7$ & $W_8$ & $W_9$ \\
$(H, D_{\text{ffn}})$ & $(512, 2048)$ & $(1024, 4096)$ & $(2048, 8192)$ & $(3072, 12288)$ & $(3840, 15360)$ & $(4096, 12288)$ & $(4096, 14336)$ & $(5120, 17920)$ & $(8192, 28672)$ \\
$I_{\text{BF16}}$ (FLOP/B) & 745 & 1446 & 2731 & 3880 & 4666 & 4706 & 4823 & 5747 & 6072 \\
\midrule
\multicolumn{10}{@{}l}{\textit{Device time per step (ms)}} \\
BF16 & 0.858 & 3.141 & 12.834 & 29.582 & 44.061 & 40.502 & 44.097 & 69.861 & 90.879 \\
INT8 RTNE & 0.765 & 2.172 & 7.403 & 15.387 & 24.004 & 21.388 & 23.587 & 38.450 & 50.239 \\
Independent SR (Threefry) & 1.466 & 3.467 & 9.959 & 19.244 & 28.836 & 25.392 & 28.194 & 44.293 & 54.721 \\
S3R ($\Phi_{\text{2D}}$) & 0.798 & 2.186 & 7.412 & 15.403 & 24.025 & 21.358 & 23.549 & 38.580 & 50.173 \\
S3R ($\Phi_{\text{bcast}}$) & 0.785 & 2.149 & 7.536 & 15.404 & 24.050 & 21.390 & 23.577 & 38.588 & 50.173 \\
\midrule
\multicolumn{10}{@{}l}{\textit{PRNG share of busy time (\%)}} \\
Independent SR (Threefry) & 56.5 & 44.2 & 30.6 & 23.7 & 19.8 & 18.7 & 19.3 & 15.3 & 9.9 \\
S3R ($\Phi_{\text{2D}}$) & 0.9 & 0.3 & 0.2 & 0.0 & 0.0 & 0.0 & 0.0 & 0.0 & 0.0 \\
S3R ($\Phi_{\text{bcast}}$) & 1.3 & 0.6 & 0.2 & 0.0 & 0.0 & 0.0 & 0.0 & 0.0 & 0.0 \\
\bottomrule
\end{tabular}%
}
\vspace{-6pt}
\end{table*}

\begin{table*}[t!]
\centering
\caption{Cost of the S3R-EC Corrections on One SwiGLU FFN Block (Forward + Backward, per Device) on TPU v5p, from the Two Runs behind the Extra Times in Table~\ref{tab:ffn_step_time}, without the Rounding-Variance Statistic. Speedups: Range over the Two Runs. Extra Time: Device Time over S3R ($\Phi_{\text{2D}}$) in the Same Run, Expressed as Elementwise int32 VPU Operations per Upstream-Gradient Element, Mean of the Two Runs. MXU-Category Time: Time of the Fusions That Contain a GEMM, First Run. Temporary Memory: XLA Memory Analysis.}
\label{tab:ffn_corr}
\small
\setlength{\tabcolsep}{0pt}
\begin{tabular*}{\textwidth}{@{\extracolsep{\fill}} l r r r r r r r r r @{}}
\toprule
 & $W_1$ & $W_2$ & $W_3$ & $W_4$ & $W_5$ & $W_6$ & $W_7$ & $W_8$ & $W_9$ \\
\midrule
\multicolumn{10}{@{}l}{\textit{Speedup over BF16}} \\
INT8 RTNE & 1.12--1.14 & 1.43 & 1.67--1.71 & 1.95--1.96 & 1.84 & 1.87--1.88 & 1.87--1.92 & 1.83--1.84 & 1.82--1.83 \\
S3R ($\Phi_{\text{2D}}$) & 1.07--1.09 & 1.42 & 1.67--1.70 & 1.94--1.96 & 1.84 & 1.88 & 1.87--1.93 & 1.83--1.84 & 1.82--1.83 \\
S3R-EC, two-way & 0.95--0.96 & 1.32 & 1.68--1.70 & 1.85--1.88 & 1.78--1.80 & 1.83--1.84 & 1.82--1.86 & 1.82--1.83 & 1.75--1.76 \\
S3R-EC, dX-only & 0.94--0.95 & 1.29--1.30 & 1.65--1.66 & 1.87--1.88 & 1.78--1.79 & 1.82--1.83 & 1.82--1.86 & 1.78--1.79 & 1.78--1.79 \\
\midrule
\multicolumn{10}{@{}l}{\textit{Extra time over S3R ($\Phi_{\text{2D}}$) (int32 operations per upstream-gradient element)}} \\
S3R-EC, two-way & $+19.9$ & $+15.2$ & $-2.1$ & $+23.1$ & $+16.7$ & $+17.0$ & $+21.5$ & $+5.9$ & $+49.3$ \\
S3R-EC, dX-only & $+22.5$ & $+19.9$ & $+6.2$ & $+20.0$ & $+19.7$ & $+20.0$ & $+19.8$ & $+21.8$ & $+28.2$ \\
\midrule
\multicolumn{10}{@{}l}{\textit{Change in MXU-category time over S3R ($\Phi_{\text{2D}}$) (ms)}} \\
S3R-EC, two-way & $-0.041$ & $-0.177$ & $-0.621$ & $-0.057$ & $-0.455$ & $-0.297$ & $-0.009$ & $-0.855$ & $+0.917$ \\
S3R-EC, dX-only & $-0.051$ & $-0.084$ & $-0.417$ & $-0.175$ & $-0.063$ & $-0.074$ & $+0.024$ & $+0.075$ & $+0.157$ \\
\midrule
\multicolumn{10}{@{}l}{\textit{Temporary memory over S3R ($\Phi_{\text{2D}}$), both corrections (equal to within 1\,MiB)}} \\
MiB & $+42$ & $+69$ & $+129$ & $+230$ & $+242$ & $+194$ & $+226$ & $+282$ & $+226$ \\
\% & $+14.0$ & $+11.2$ & $+10.2$ & $+12.1$ & $+9.9$ & $+9.7$ & $+9.8$ & $+9.6$ & $+8.0$ \\
\bottomrule
\end{tabular*}
\vspace{-6pt}
\end{table*}

\textbf{Other gradient formats (one seed).} Two further regimes use one seed and the learning rates $\{3, 6\} \times 10^{-4}$. With one gradient scale per token instead of one per tensor, the scales lie on the contraction axis of the weight-gradient product, which therefore applies them in \texttt{f32}, and the SR share of $v_t$ falls to about $5 \times 10^{-4}$. With 4-bit per-tensor gradients, the SR variance makes up more than half of $v_t$. Independent SR then reaches 5.796, 0.9 behind BF16, and uncorrected S3R with $\Phi_{\text{2D}}$ reaches 6.777. Most best learning rates lie at the low end of the two-point grid, so these 4-bit losses are upper bounds. S3R-EC was not evaluated in either regime.

\textbf{Full training steps.} Table~\ref{tab:e2e_int32} repeats the full-step measurements with int32-output INT8 GEMMs, and Table~\ref{tab:e2e_profile} breaks down the device time of Llama 3.1-8B by operation category. With int32 output, S3R needs 0.9\,GB more device memory than RTNE on Llama 3.1-8B.

In the MoE models, the INT8 configurations quantize the grouped GEMMs of the routed experts, in the forward pass and in both backward products, as well as the dense projections. In Qwen3-30B-A3B, the dense projections are the attention q/k/v/o projections and the router. In DeepSeek-V2-Lite, they are the MLA down- and up-projections, the attention output projection, the shared-expert MLP, the MLP of the leading dense layer, and the router. The attention kernel, the logits, and the embedding stay in \texttt{bf16}. All MoE configurations, BF16 included, use the same grouped-GEMM backend, with the Pallas grouped-matmul kernel disabled.

The expert GEMMs operate on the token matrix sorted by expert, so S3R and the correction are adapted to it. The scale $\Delta$ is computed per shard of the grouped GEMM, and S3R uses one threshold per row of the sorted matrix ($\Phi_{\text{2D}}$). These rows have no batch and sequence structure, so the token-mean term of the correction is not applied. The two-way correction applies only the channel-mean part $\boldsymbol{\varepsilon}$ to dX and dW, and the dX-only correction applies it to dX only. Here $\boldsymbol{\varepsilon}$ is centered by its mean over the rows of the same expert group, in place of $m_b$, and normalized by the local row count. The dense projections of the MoE models receive the full corrections. No PRNG stream reaches the expert computation, so the SR key of the expert GEMMs is derived in each call by hashing a few activation rows. The dense GEMMs take their key from the trainer's PRNG stream. Because this key depends on the data, Lemma~\ref{lemma:unbiasedness} does not strictly apply to the expert GEMMs.

The grouped INT8 GEMM supports only int32 accumulation. In the \texttt{bf16}-output variant, the expert GEMMs therefore compute in int32 and round each shard's result to \texttt{bf16} before any cross-device reduction. The expert-path correction runs in two fused Pallas kernels over the expert-sorted rows. The first computes $\boldsymbol{\varepsilon}$ and its per-group sums. The second corrects the int32 dX accumulator before rounding it and, for the two-way correction, forms the per-group factor of the dW correction. Neither computes the rounding-variance statistic. Table~\ref{tab:e2e_moe_profile} breaks down the time that the two-way correction adds on Qwen3-30B-A3B; 95\% of it is spent in the two kernels.

\begin{table}[t!]
    \centering
    \caption{Full Training Step with Int32-Accumulator INT8 GEMMs: Median Step Time (ms) under the Protocol of Table~\ref{tab:e2e_step}, and Speedup over BF16 on the Dense Models ($> 1$ Is Faster)}
    \label{tab:e2e_int32}
    \resizebox{\columnwidth}{!}{%
    \begin{tabular}{@{} l r r r r c c @{}}
        \toprule
         & & & Indep.\ SR & & \multicolumn{2}{c}{Speedup over BF16} \\
        Model & BF16 & RTNE & (Threefry) & S3R & RTNE & S3R \\
        \midrule
        Llama 3.1-8B & 3862 & 4171 & 4624 & 4199 & $0.926\times$ & $0.920\times$ \\
        Qwen3-8B & 4018 & 4542 & 5006 & 4582 & $0.885\times$ & $0.877\times$ \\
        Gemma 3-12B & 6550 & 6294 & 7001 & 6322 & $1.041\times$ & $1.036\times$ \\
        \midrule
        Qwen3-30B-A3B & 6454 & 7894 & 8495 & 7888 & & \\
        DeepSeek-V2-Lite (16B) & 3659 & 4307 & 4746 & 4324 & & \\
        \bottomrule
    \end{tabular}%
    }
    \vspace{-6pt}
\end{table}

\begin{table}[t!]
    \centering
    \caption{Device Time per Step (ms per Device) of Llama 3.1-8B by Operation Category, from Execution Traces of 3 Training Steps. Self Time, Averaged over Steps and Devices; Totals Are within 0.15\% of the Step Time Measured in the Profiled Runs}
    \label{tab:e2e_profile}
    \resizebox{\columnwidth}{!}{%
    \begin{tabular}{@{} l r r r @{}}
        \toprule
        Category & BF16 & S3R (int32 acc.) & RTNE (\texttt{bf16} out) \\
        \midrule
        GEMMs (MXU) & 2236 & 1187 & 1182 \\
        Plain all-reduce (TP partial sums) & 5 & 1791 & 43 \\
        Fused all-reduce-scatter & 362 & 10 & 525 \\
        All-gather (FSDP weights, TP activations) & 484 & 349 & 319 \\
        Flash-attention kernel & 508 & 511 & 526 \\
        Elementwise and reduction fusions (VPU) & 174 & 270 & 249 \\
        Other & 88 & 79 & 75 \\
        \midrule
        Total & 3858 & 4198 & 2919 \\
        \bottomrule
    \end{tabular}%
    }
    \vspace{-6pt}
\end{table}

\begin{table}[t!]
    \centering
    \caption{Device Time per Step (ms per Device) Added by the Two-Way Correction over S3R on Qwen3-30B-A3B with \texttt{bf16}-Output INT8 GEMMs, from Execution Traces of 3 Training Steps. Kernel Calls Are per Layer (S3R $\to$ Two-Way)}
    \label{tab:e2e_moe_profile}
    \footnotesize
    \begin{tabular}{@{} l r c @{}}
        \toprule
        Component & $\Delta$ (ms) & Calls \\
        \midrule
        Expert-path correction kernels & $+325$ & \\
        \quad $\boldsymbol{\varepsilon}$ and its per-group sums & $+136$ & $0 \to 3$ \\
        \quad Correction of dX and dW factor & $+189$ & $0 \to 3$ \\
        Collectives & $+19$ & \\
        Flash-attention kernel & $+19$ & unchanged \\
        Elementwise and reduction fusions & $-33$ & \\
        Main expert GEMMs, dense GEMMs, token sort & $+4$ & unchanged \\
        Other & $+9$ & \\
        \midrule
        Total (7285 $\to$ 7628) & $+343$ & \\
        \bottomrule
    \end{tabular}
    \vspace{-6pt}
\end{table}

\begin{table}[t!]
    \centering
    \caption{S3R and S3R-EC Settings and Training Protocol}
    \label{tab:hyperparam_specs}
    \footnotesize
    \begin{tabularx}{\columnwidth}{@{} >{\raggedright\arraybackslash}p{0.27\columnwidth} >{\raggedright\arraybackslash}X @{}}
        \toprule
        \textbf{Setting} & \textbf{Value / Protocol} \\
        \midrule
        Convergence Model & 12-Layer Llama-Style Decoder: Hidden 1024, 16 Heads of Dimension 64, SwiGLU FFN Width 2816, RMSNorm, RoPE, 32{,}768-Token Vocabulary \\
        Training Dataset \& Stream & C4 \cite{raffel2020exploring} (\texttt{c4/en} 3.0.1), Packed, Each Token Seen at Most Once (Convergence); Synthetic (Full Step) \\
        Convergence Budget & 2000 Steps (32.8M Tokens), Best LR on Seed 0, 3 Seeds \\
        Sequence Length $S$ & $2048$ (FFN Block, Convergence); $4096$ (Full Step) \\
        Batch Size $B$ & $8$ ($4$ for $W_9$), FFN Block; $8$, Convergence; $4$ per Device, Full Step \\
        Peak Learning Rate $\eta_{\max}$ & Swept per Configuration (Best Values in Table~\ref{tab:convergence}); 10\% Linear Warmup, Cosine Decay to $0.1\,\eta_{\max}$ \\
        Optimizer & AdamW (Unchanged) \\
        AdamW $(\beta_1, \beta_2, \epsilon)$ & $\beta_1 = 0.9$, $\beta_2 = 0.999$, $\epsilon = 10^{-8}$ \\
        Weight Decay $\lambda$ \& Grad Clipping & $\lambda = 0.01$ (Decoupled), Global $\ell_2$-Norm Clip at $1.0$ \\
        Full-Step Numerics & \texttt{bf16} Activations; \texttt{f32} Master Weights and Optimizer States; Full Rematerialization \\
        Full-Step Timing & Median over 20 Steps after 12 Warm-Up Steps; 10th--90th Percentile Spread $\le 0.21\%$ of the Median; Repeated Runs within 0.6\% \\
        Quantized Contractions & FFN / Expert (\texttt{gate}, \texttt{up}, \texttt{down}) \& Attention Projections; Logits, Embedding in \texttt{bf16} \\
        Forward Quantization Format & Symmetric INT8 (\texttt{s8}, Range $[-127, 127]$), RTNE; Per-Token Activation and Per-Output-Channel Weight Absmax Scales \\
        Backward Quantization Format & Symmetric INT8 (\texttt{s8}) on $\widetilde{\mathbf{G}}_t$, Shared by the dX and dW GEMMs ($\alpha_{\text{share}} = 1/2$); Per-Tensor Step $\Delta = \max|\mathbf{G}'|/127$ on the Scale-Folded Gradient (per Shard for MoE Expert GEMMs) \\
        S3R Threshold Format & 8-Bit Thresholds with 7 Effective Bits and a Half-Step Offset \\
        S3R Seed Shape & $[B, S, 1]$ per Layer ($\Phi_{\text{2D}}$ Default; $\Phi_{\text{bcast}}$ Ablated) \\
        S3R-EC Correction & Two-Way: $\mathbf{E}^{\text{str}}_b$ of (\ref{eq:estr}) Removed from $\widehat{\nabla}_{\mathbf{X}}$ and $\bar{\mathbf{g}}_{\mathbf{W}}$ (Algorithm~1 in Table~\ref{tab:algorithms}); dX-Only: Removed from $\widehat{\nabla}_{\mathbf{X}}$ Only \\
        \bottomrule
    \end{tabularx}
    \vspace{-6pt}
\end{table}

\section{Sharding-Invariant Seeding in Distributed Training}
\label{app:extended_dynamics}

With partitionable Threefry, the rounding thresholds of S3R do not depend on how tensors are sharded across devices. When a run moves to a different parallelism layout, for example from pure FSDP to mixed FSDP and TP, the thresholds stay bitwise identical, and no PRNG state is exchanged between devices.

The invariance holds because every threshold is a function of global indices only, never of the device that computes it. For independent SR, this requires a counter for each of the $B S D_{\text{out}}$ elements, partitioned along the sharded tensor axes; the S3R seeds are much smaller. The default $\Phi_{\text{2D}}$ seed $\mathbf{u}_t \in [0, 1]^{B \times S \times 1}$ is a partitionable Threefry draw over the global token indices $(b, s)$ for each step and layer. The $\Phi_{\text{bcast}}$ seed $\mathbf{u}_t \in [0, 1]^{B \times 1 \times 1}$ depends only on the step, the layer index and the global batch index $b$, which are folded into the step key with \texttt{jax.random.fold\_in}. Neither seed has a hidden dimension, so both are invariant to TP sharding of the hidden dimension; $\Phi_{\text{bcast}}$ is also invariant to sequence sharding and to token-level dispatch to MoE experts.

We verified the invariance on TPU by generating the noise and threshold tensors of a $[8, 256, 2048]$ upstream gradient under three FSDP$\times$TP factorizations of the same device set (pure FSDP, mixed, and pure TP) and comparing them bitwise with an unsharded generation. With partitionable Threefry, the JAX default used by all S3R configurations, independent noise, $\Phi_{\text{2D}}$ thresholds and $\Phi_{\text{bcast}}$ thresholds are bitwise identical under all three factorizations, and the compiler adds no collectives. Independent noise is thus invariant as well; S3R keeps the property with a far smaller seed. Non-partitionable Threefry is also bitwise identical but adds all-to-all and collective-permute operations.

A separate CPU check with emulated devices in four mesh shapes agrees. With partitionable Threefry, independent noise, $\Phi_{\text{2D}}$ and $\Phi_{\text{bcast}}$ (in both the Algorithm~1 form and the fold-in form) give bitwise-identical thresholds and INT8 outputs in all 40 comparisons, whereas naive per-device local draws differ in 87--99\% of entries. The emulation does not cover the collectives inserted by the compiler.

\fi
\end{document}